\documentclass{article}

\usepackage{microtype}
\usepackage{graphicx}
\usepackage{subcaption}
\usepackage{booktabs} 

\usepackage{hyperref}
\usepackage{adjustbox}
\usepackage{booktabs}
\usepackage{makecell} 
\usepackage{pifont}
\newcommand{\xmark}{\ding{55}}

\usepackage[preprint]{icml2026}

\usepackage{amsmath}
\usepackage{amssymb}
\usepackage{mathtools}
\usepackage{amsthm}
\usepackage[table]{xcolor}

\usepackage[capitalize,noabbrev]{cleveref}

\theoremstyle{plain}
\newtheorem{theorem}{Theorem}[section]
\newtheorem{proposition}[theorem]{Proposition}

\theoremstyle{definition}

\newtheorem{assumption}[theorem]{Assumption}
\theoremstyle{remark}

\usepackage[textsize=tiny]{todonotes}

\icmltitlerunning{SEDiT: Mask-Free Video Subtitle Erasure via One-step Diffusion Transformer}

\begin{document}

\twocolumn[
  \icmltitle{SEDiT: Mask-Free Video Subtitle Erasure via One-step Diffusion Transformer}



  \icmlsetsymbol{equal}{*}

  \begin{icmlauthorlist}
    \icmlauthor{Zheng Hui}{comp}
    \icmlauthor{Yunlong Bai}{comp}
  \end{icmlauthorlist}

  \icmlaffiliation{comp}{Baidu Inc., Beijing, China}

  \icmlcorrespondingauthor{Zheng Hui}{zheng\_hui@aliyun.com}
  \icmlcorrespondingauthor{Yunlong Bai}{baiyunlong@baidu.com}

  \icmlkeywords{Machine Learning, ICML}

  \vskip 0.3in
]



\printAffiliationsAndNotice{}  

\begin{abstract}
Recent breakthroughs in video diffusion models have significantly accelerated the development of video editing techniques. However, existing methods often rely on inpainting video frames based on masked input, which requires extracting the target video mask in advance, and the precision of the segmentation directly affects the quality of the completion. In this paper, we present \textbf{SEDiT}, a novel one-stage video \textbf{S}ubtitle \textbf{E}rasure approach via One-step \textbf{Di}ffusion \textbf{T}ransformer. We introduce a mask-free inference approach that enables direct erasure of the targeted subtitle. The proposed one-stage framework mitigates the sub-optimality inherent in the two-stage processing of prior models.

Since subtitle removal is a localized editing task in which most pixels remain unchanged, the underlying distribution shift is minimal, making it well-suited to one-step generation under rectified flow. We empirically validate the reliability of one-step denoising and further provide a formal theoretical justification. Under the localized-editing structure of subtitle removal, the conditional optimal transport (OT) map and its induced rectified flow velocity field are \textbf{Lipschitz continuous} with respect to the latent variable, which underpins the theoretical feasibility of one-step sampling.

To address the challenge of long-term temporal consistency, we adopt a hybrid training strategy by occasionally conditioning the model with a clean first-frame latent. This facilitates temporal continuity, allowing each segment during inference to leverage the output of its predecessor. To avoid visible seams caused by cropping and reinserting processed targets, particularly in scenarios involving substantial motion, we feed the original video directly into SEDiT. Thanks to one-step and chunk-wise streaming inference, our method can efficiently handle native 1440p video with infinite length. Project page: \url{http://zheng222.github.io/SEDiT_project}
\end{abstract}

\section{Introduction}

In recent years, video diffusion models~\cite{stable-video-diffusion,hunyuanvideo,ltx-video,wan21} have made remarkable progress in video generation. Current mainstream models employ the Diffusion Transformer (DiT)~\cite{dit} architecture to achieve unprecedented generation quality. Video inpainting, a key technique in the field of video editing, has also benefited from advances in foundational video generation models, leading to a significant improvement in restoration quality.

Video object removal is designed to eliminate designated objects from video sequences while maintaining spatial background consistency and temporal coherence. An early GAN-based representative method, ProPainter~\cite{propainter}, addresses video inpainting by completing the optical flow within masked regions, combined with a mask-guided sparse Transformer to handle the video inpainting task. Due to limitations in model capacity and the inherent constraints of GAN-based generation, such approaches tend to produce visual artifacts in complex scenarios. Recently, diffusion-based video inpainting methods have gained popularity. DiffuEraser~\cite{diffueraser} utilizes the preliminary completion results of Propainter~\cite{propainter} combined with DDIM inversion to obtain better initialization. Minimax-Remover~\cite{minimax-remover} is built on the Wan2.1-1.3B~\cite{wan21} and replaces original text conditions with learnable contrastive tokens embedded into the self-attention stream. EraserDiT~\cite{eraserdit} employs LTX-Video~\cite{ltx-video} as a compact backbone and integrates a Vision-Language Model~\cite{qwen2.5-vl} for interactive object removal. Despite rapid advances, existing methods still face challenges including visual artifacts, blurring, temporal flickering, and a fundamental reliance on precise segmentation masks.

\begin{figure}[ht]
  \centering
  \begin{subfigure}[c]{0.45\textwidth}
    \centering
    \adjustbox{valign=c}{\includegraphics[width=\textwidth]{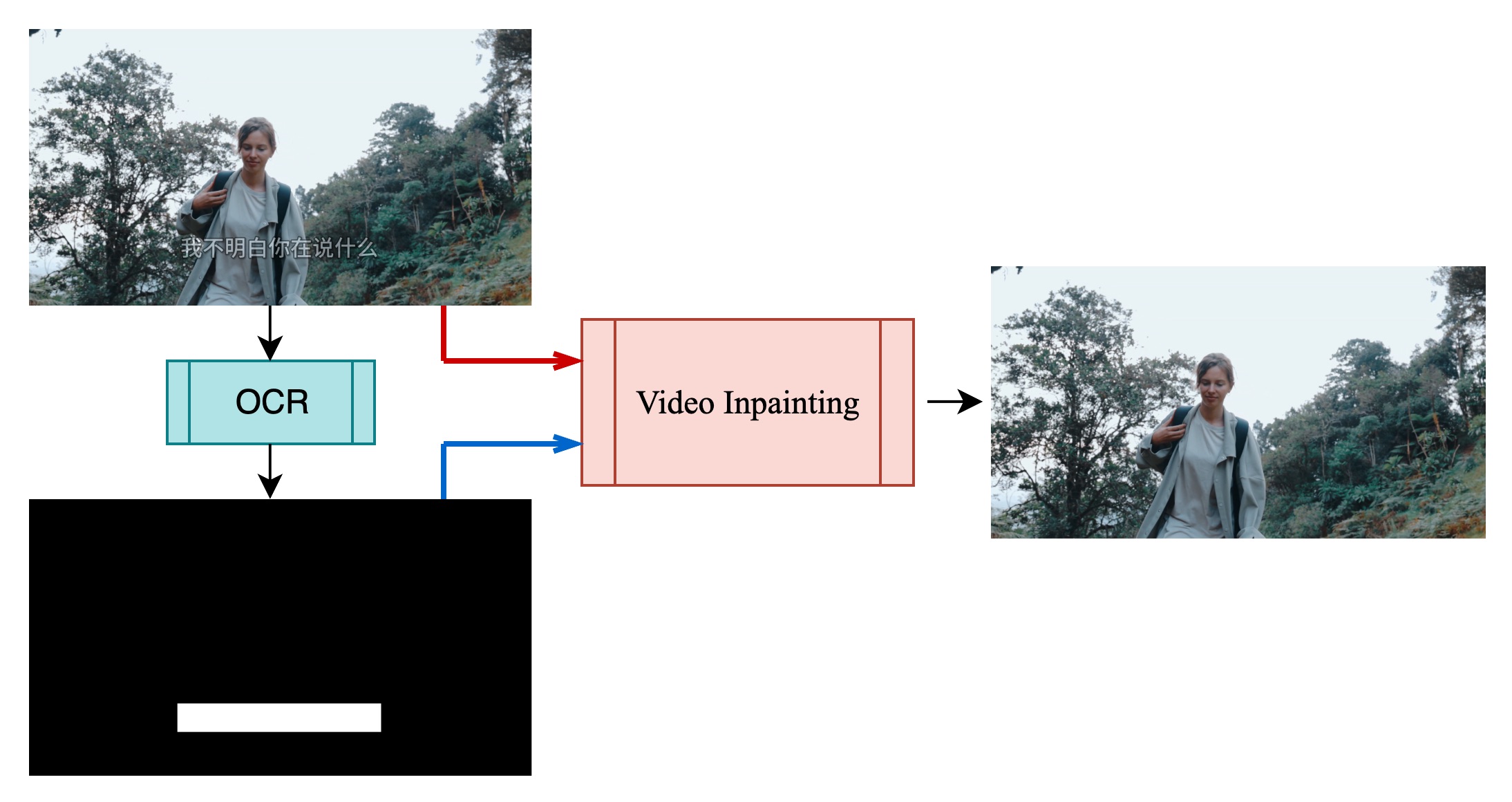}}
    \caption{Two-stage subtitle erasure}
    \label{fig:two-stage-framework}
  \end{subfigure}
  \begin{subfigure}[c]{0.45\textwidth}
    \centering
    \adjustbox{valign=c}{\includegraphics[width=\textwidth]{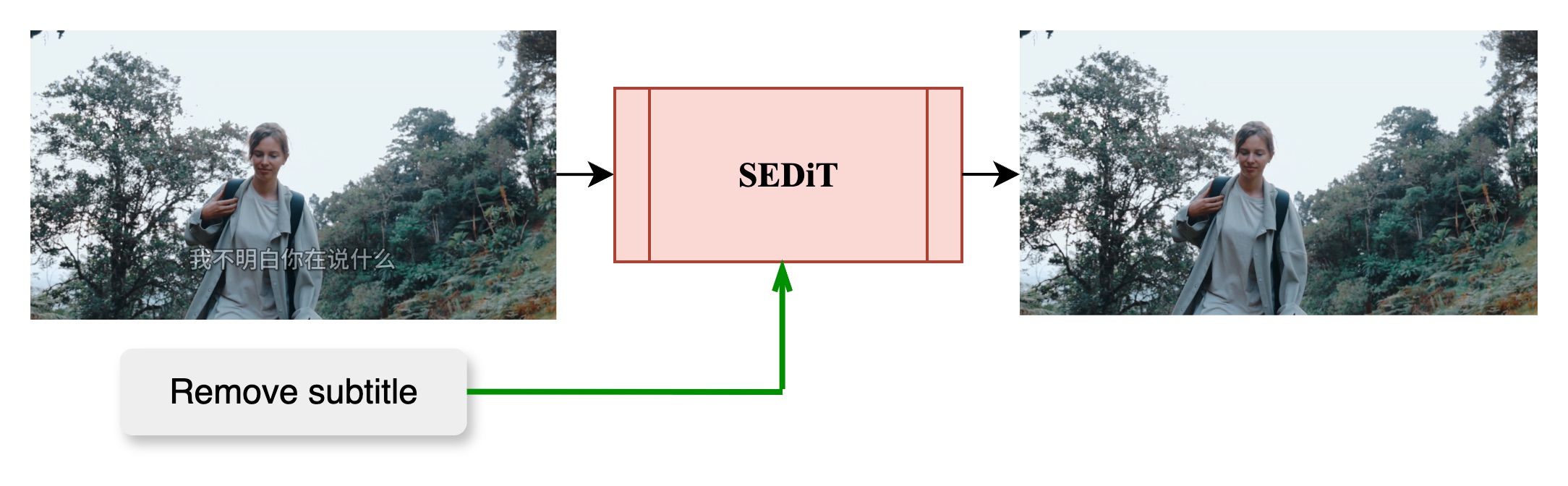}}
    \caption{One-stage subtitle erasure}
    \label{fig:end-to-end-framework}
  \end{subfigure}
  \caption{Comparison of frameworks between previous mask-based subtitle removal methods and our mask-free one-stage method.}
  \label{fig:framework-comparison}
\end{figure}

Subtitle removal in video represents a critical sub-task of object removal. Unlike generic object removal—which is typically interactive and processes short single-shot clips of around 10 seconds—subtitle removal must handle entire videos spanning multiple shots without any user interaction. Subtitles frequently exhibit complex visual properties such as transparency, stylized fonts, and gradient effects, which significantly hinder accurate segmentation. As illustrated in Figure~\ref{fig:two-stage-framework}, most prior methods use OCR-derived bounding boxes to localize subtitle regions. However, bounding boxes discard crucial pixel-level guidance information within character gaps, which severely limits restoration quality. To obtain fine-grained text masks, one would need a dedicated text matting model, substantially increasing the complexity of the overall system.

To address these limitations, we propose a one-stage framework for video subtitle removal that eliminates the need for OCR to localize subtitles. As illustrated in Figure~\ref{fig:end-to-end-framework}, we eliminate the OCR module and instead rely solely on the predefined prompt instruction to accomplish subtitle erasure. Inspired by recent advances in instruction-based image editing algorithms such as \emph{FLUX.1 Kontext}~\cite{flux-kontext} and \emph{Qwen-Image}~\cite{qwen-image}, we propose \emph{SEDiT}, a method that operates without explicit masks and performs the task using a fixed user prompt only. This simple yet effective design is particularly well-suited to video subtitle removal, a task that typically requires no user interaction and benefits from efficient batch processing. To enable direct processing of 1080p video, we adopt LTX-Video~\cite{ltx-video} as the backbone model, which incorporates a VAE with very high spatial-temporal compression. To fully exploit the capabilities of the base model while minimizing structural modifications, we adopt the same conditioning strategy as \emph{FLUX.1 Kontext}, where the conditional video is concatenated along the sequence dimension. In addition, to obtain paired training data, we designed a comprehensive data synthesis pipeline that includes font attribute configuration, subtitle orientation and positioning, as well as transition effects. This setup covers most subtitle scenarios and closely approximates the conditions found in real-world applications. To accommodate the demands of long-form video subtitle removal, we adopt a chunk-wise processing strategy. Except for the first chunk, each subsequent chunk leverages the last frame of the preceding chunk as a reference to enhance temporal consistency. Due to the strong conditioning capability of the reference video, satisfactory temporal consistency across chunks can be maintained by referencing only a single frame between adjacent segments.

Our main contributions can be summarized as follows.
\begin{itemize}
    \item We present SEDiT, a mask-free one-stage framework designed for video subtitle erasure. Unlike previous mask-based video inpainting approaches, our method eliminates the need for explicit mask generation to accomplish subtitle removal. We have developed a relatively comprehensive data synthesis pipeline that incorporates font attributes, subtitle rendering directions, and transition effects. Thanks to the high-quality synthetic data, our method achieves promising results in video subtitle removal.
  
  \item To inject the reference video without modifying the model architecture, we concatenate the conditional video latent and the noisy video latent along the sequence dimension. This extends the prevailing image editing paradigm into the domain of video editing. This approach significantly reduces training complexity while preserving strong reference fidelity. Moreover, this conditioning strategy can be easily applied to other video backbone models, enabling flexible integration of video control signals without structural modifications. Furthermore, benefiting from the high input-output consistency inherent in subtitle removal tasks, we validate the feasibility of one-step sampling inference, which facilitates scalable deployment of the proposed framework.
  
  \item To enable the processing of video segments of arbitrary length, we adopt a chunk-wise strategy to handle the video. The chunk size is dynamically adjusted based on the conditional video's resolution, with higher resolutions corresponding to smaller chunk sizes. To mitigate temporal discontinuities between chunks, the last frame of the preceding chunk is used as the first frame of the noisy video latent in the subsequent chunk. Owing to the strong conditioning effect of the reference video, a single-frame condition is sufficient to achieve satisfactory temporal coherence. For tasks involving strong reference-based video editing, this is a simple yet effective strategy. 
\end{itemize}

\section{Related Work}

\subsection{Image Editing via Prompt Instruction}
In the field of image editing, InstructPix2Pix~\cite{instructpix2pix} has demonstrated the potential of the instruction-guided diffusion model for performing image editing tasks. 
Recently, \emph{FLUX.1 Kontext}~\cite{flux-kontext} has achieved significant success in the domain of instruction-guided image editing. It is a simple flow matching model that concatenates context and instruction tokens \emph{along the sequence dimension} and directly estimates a velocity prediction target. Very recently, \emph{Qwen-Image}~\cite{qwen-image} has also proposed a similar approach by incorporating the VAE-encoded latent representation of the input image into the image stream, concatenating it with the noisy image latent \emph{along the sequence dimension}. These suggest that, in the field of image editing, concatenating the conditional image latent with the noised image latent along the sequence dimension has become the prevailing paradigm. Inspired by these works, we investigate strategies for incorporating conditional video inputs into video editing.

\subsection{Diffusion-based Video Inpainting}
Diffusion-based video inpainting falls under the category of masked video-to-video editing (MV2V), which requires a mask sequence of the region of interest (ROI) $\bar{\mathbf{m}} \in \mathbb{R}^{f \times h \times w \times c_{m}}$. This approach completes masked video using a conditional diffusion model $u_{\theta } \left ( \mathbf{z}_t, \mathbf{z}_m, \bar{\mathbf{m}}, \text{prompt}, t  \right ) $, where $\mathbf{z}_t \in \mathbb{R}^{f \times h \times w \times c}$ is nosiy input,$\mathbf{z}_m \in \mathbb{R}^{f \times h \times w \times c}$ is masked video latent. Typically, $\mathbf{z}_t$, $\mathbf{z}_m$, and $\bar{m}$ are concatenated along the channel dimension as the denoising network's main stream inputs $\mathbf{z}_{in} \in \mathbb{R}^{f \times h \times w \times (c_m+2c)}$. Both Minimax-Remover~\cite{minimax-remover} and EraserDiT~\cite{eraserdit} adopt the paradigm of channel-wise concatenation. Minimax-Remover is built on the pretrained text-to-video generation model Wan2.1-1.3B~\cite{wan21}. To accommodate its design, the original input channel size of the backbone model is expanded from 16 to 48 ($c_m = 16$), which inevitably requires full-parameter fine-tuning. 

\section{Methodology}
\subsection{Preliminary}
\textbf{Flow matching.} Flow matching video generative models employ a neural network to synthesize realistic video frames by learning a time-dependent vector field that guides samples from a noise distribution toward the target video distribution. Given an input video $\mathbf{x}\in \mathbb{R} ^ {f \times h \times w \times c}$, a pretrained variational autoencoder (VAE) encoder $\mathcal{E}$ encodes $\mathbf{x}$ to latent representation $\mathbf{z_0}=\mathcal{E} \left ( \mathbf{x} \right ) $. The rectified flow-matching loss
\begin{equation}
    \mathcal{L}_{\theta} = \mathbb{E} _{t\sim p\left ( t \right ),\mathbf{z}_{0},\epsilon,c_\text{text} }\left [\left \| v_{\theta }\left ( \mathbf{z}_t,t,\mathbf{z}_\text{ref},c_\text{text} \right ) - \mathbf{v}_t  \right \|_{2}^{2} \right ], 
\label{eq:flow-matching}
\end{equation}
where $\mathbf{z}_t$ is the linearly interpolated latent between $\mathbf{z_0}$ and noise $\epsilon \in \mathcal{N} \left ( 0, \mathbf{I} \right ) $, $\mathbf{z}_t = (1 - t)\mathbf{z_0} + t\epsilon$, $\mathbf{z}_{ref}$ is the reference video latent $\mathbf{z}_{ref} = \mathbf{z}_{0} + \Delta_\text{subtitle}$, $c_\text{text}$ is text prompt, and the velocity $\mathbf{v}_t = \frac{d\mathbf{z}_t}{dt}=\epsilon - \mathbf{z}_{0}$.

\subsection{Model Architecture}
\begin{figure*}[ht]
  \centering
  \begin{subfigure}[c]{0.48\textwidth}
    \centering
    \adjustbox{valign=c}{\includegraphics[width=\textwidth]{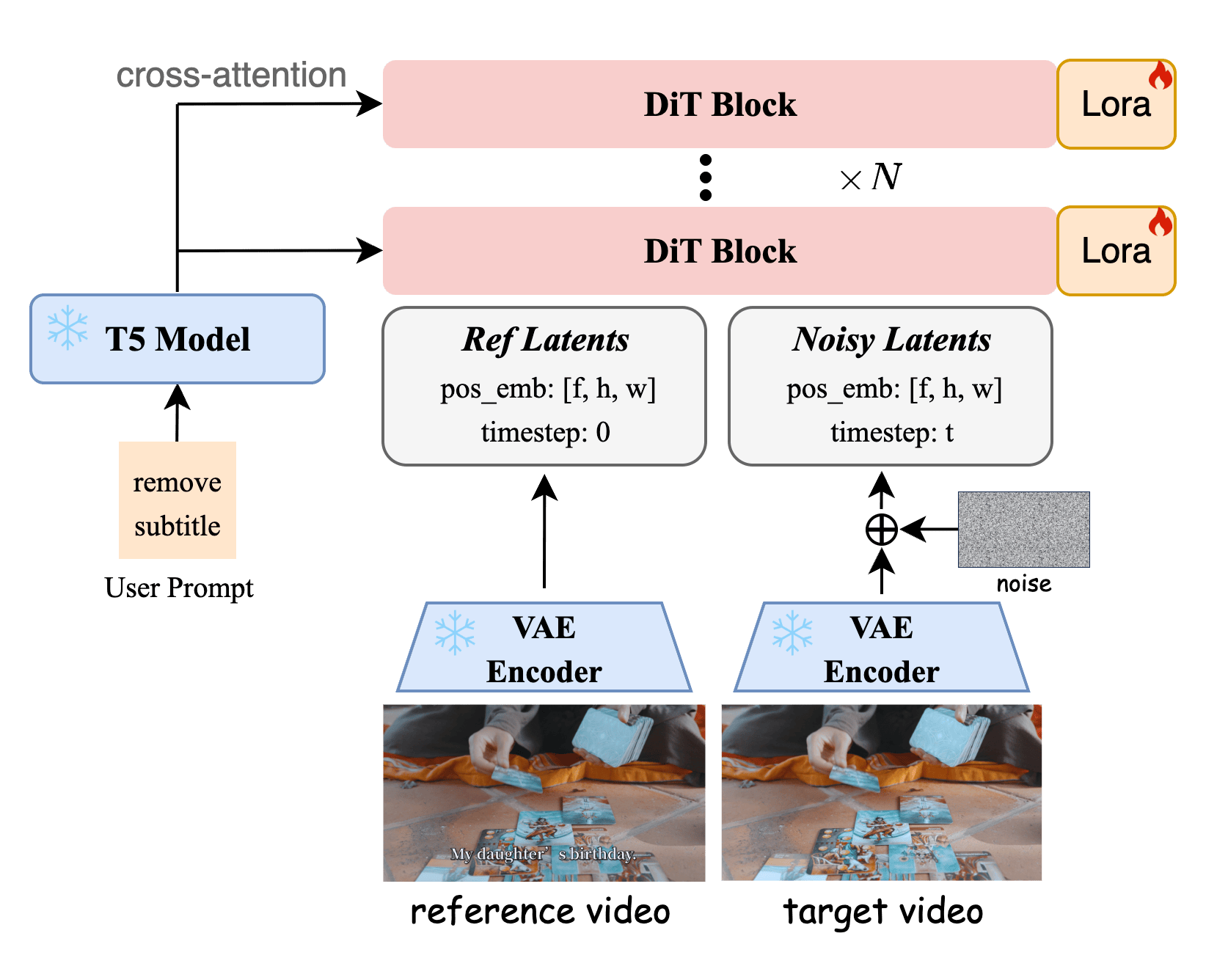}}
    \caption{The architecture of the training phase.}
    \label{fig:SEDiT_train_arch}
  \end{subfigure}
  \hfill
  \begin{subfigure}[c]{0.48\textwidth}
    \centering
    \adjustbox{valign=c}{\includegraphics[width=\textwidth]{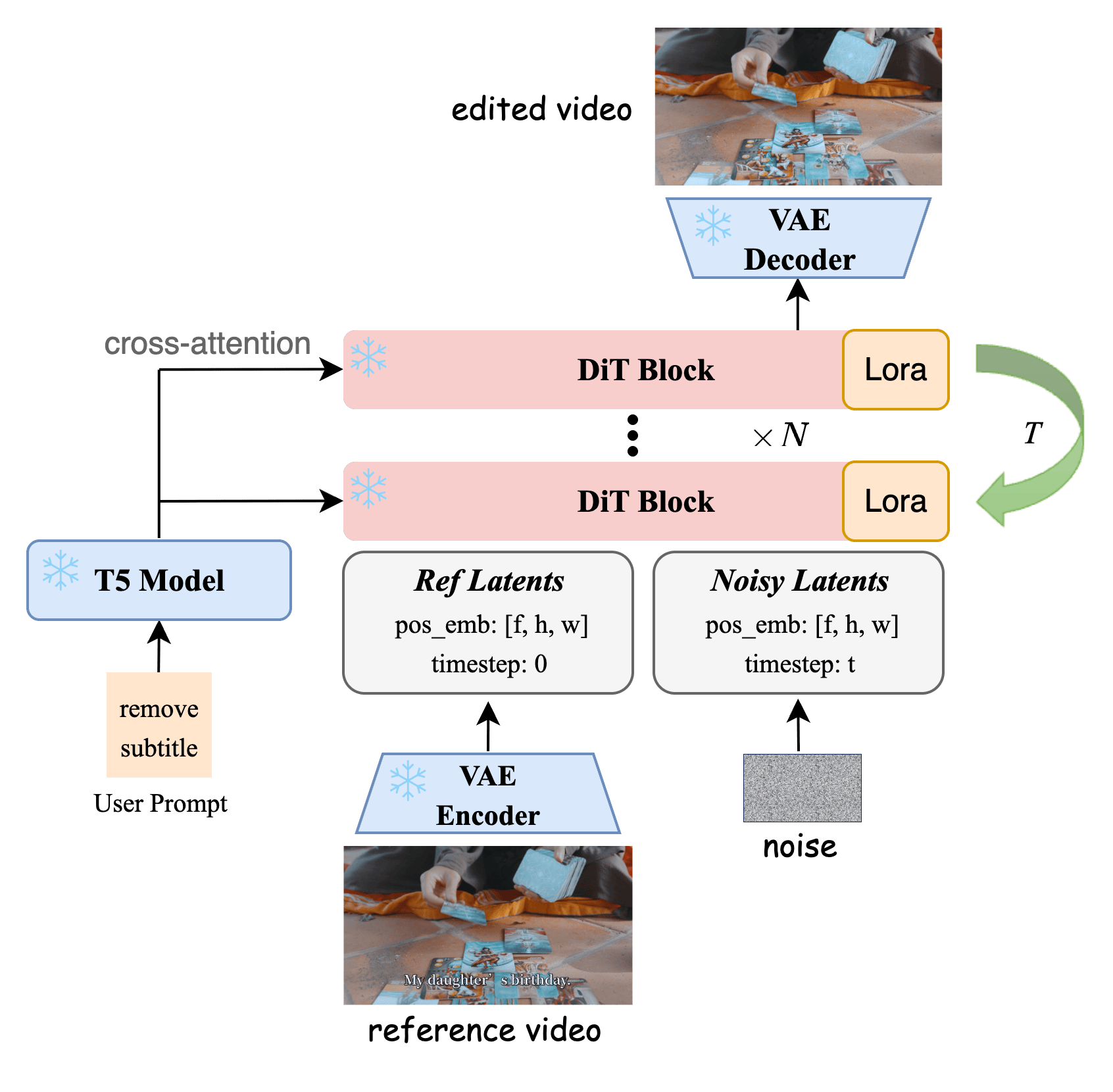}}
    \caption{The architecture of the inference phase.}
    \label{fig:SEDit_infer_arch}
  \end{subfigure}
  \caption{The overview of our mask-free video subtitle erasure framework. We introduce a conditional video branch alongside the original video branch. Given the conditional video, the VAE encoder maps it into tokens, which are concatenated with video latent tokens and then sent to the DiT.}
  \label{fig:SEDiT_framework}
\end{figure*}

The overall framework is illustrated in Figure~\ref{fig:SEDiT_framework}. We use LTX-Video-2B-0.9.6~\cite{ltx-video} as the video generation base model, which adopts a Diffusion Transformer (DiT) architecture. Our goal is to learn a model that can generate videos conditioned jointly on a text prompt and a reference video. Given a reference video with subtitles, we first encode it into the latent space using the pre-trained VAE encoder. The reference video latents are patchified into $\mathbf{z}_\text{ref} \in \mathbb{R}^{B \times (F \times H \times W) \times C}$, same as the noisy video tokens. Since the reference video latents remain clean through the denoising process, we set their denoising timestep to $0$ during diffusion. Then, the reference video tokens are concatenated with noisy video tokens along the sequence dimension and processed jointly through successive DiT blocks. The input video tokens can be expressed as $\mathbf{z}_{in} = \mathbb{R}^{B \times 2(FHW) \times C}$.

We encode positional information via 3D Rotary Positional Embedding (RoPE)~\cite{rope}, where the embeddings for the reference $\mathbf{z}_\text{ref}$ are aligned with those of the noisy tokens $\mathbf{z}_\text{noisy}$. Concretely, if a token position is denoted by the triplet $\mathbf{u} = (\mathrm{f}, \mathrm{h}, \mathrm{w})$, then we set $\mathbf{u}_\text{ref} = \mathbf{u}_\text{noisy} = (\mathrm{f}, \mathrm{h}, \mathrm{w})$ for both reference video tokens and target video tokens. The primary motivation for this design is to ensure frame-by-frame alignment between the reference video and the target video.

Since subtitles occupy only a small portion of the overall video content and both the reference and target videos share identical positional encodings, directly applying Equation~\ref{eq:flow-matching} to compute the loss tends to bias the model toward reconstructing the reference video. Therefore, we impose a higher loss penalty in the text regions to encourage the model to focus on subtitle removal. The focal loss function is defined by
\begin{equation}
    \mathcal{L}_\text{focal} = \mathcal{L}_{\theta } * \left ( \mathbf{I} + \alpha * \mathbf{M}_\text{subtitle} \right ) ,
\label{eq:loss}
\end{equation}
where $\mathbf{M}_\text{subtitle}$ indicates the mask obtained by filling the subtitle bounding box, $\mathbf{I}$ denotes an all-ones tensor with the same shape as $\mathbf{M}_\text{subtitle}$, and $\alpha$ is a scalar.

\subsection{Data Synthesis}\label{sec:data-synth}

\begin{figure*}[htpb]
\begin{center}
\includegraphics[width=0.95\textwidth]{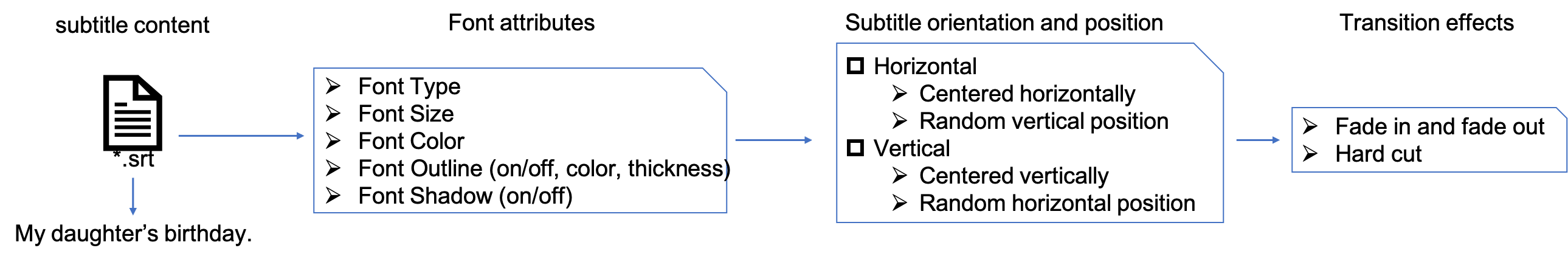}
\end{center}
\caption{Data Synthesis pipeline.}
\label{fig:data-synth}
\end{figure*}

To facilitate efficient model training, we simulate subtitles in the real-world videos through a rendering-based approach. To closely replicate the visual characteristics of subtitles in practical video scenarios, we construct a subtitle data synthesis pipeline, as illustrated in Figure~\ref{fig:data-synth}. The entire pipeline comprises four components: (1) subtitle content acquisition; (2) font attribute configuration; (3) subtitle layout specification; and (4) transition effect definition for subtitle switching. 

\textbf{Subtitle content acquisition.} We collected subtitle files (*.srt) from the internet spanning various genres, including films, anime, and television dramas. The dataset covers multiple languages, such as English, Chinese, Japanese, and Korean. We utilize the Python library \emph{pysrt} to extract and parse the detailed content of the subtitle files.

\textbf{font attribute configuration.} We collected a range of commonly used Chinese and English font files, including \emph{Source Han Sans}, \emph{Source Han Serif}, \emph{Alibaba PuHuiTi}, \emph{Arial}, and \emph{Helvetica}. We define the font size range as $[40pt, 150pt]$, and uniformly sample within this interval to obtain the current font size. Text colors are categorized into two distinct groups: bright-toned and dark-toned. Bright-toned text is defined by RGB values in which each channel—\textcolor{red}{red}, \textcolor{green}{green}, and \textcolor{blue}{blue}—is independently sampled from the range $[231, 255]$, resulting in high-luminance color compositions. In contrast, dark-toned text comprises RGB values uniformly sampled from the range $[0, 230]$ across all three channels, yielding lower-luminance appearances. For bright-toned text, the font outline color is set to a dark hue, with RGB values sampled from the range $[0, 15]$. Conversely, for dark-toned text, the outline color is assigned a light hue, with RGB values drawn from the interval $[231, 255]$. The thickness of the text border is determined probabilistically: with a probability of $50\%$, it is set to zero, indicating no visible outline. For the remaining $50\%$, the border width is uniformly sampled from the interval $[2pt, 5pt]$, producing a visible outline with variable thickness. In certain cases, subtitle text is rendered with shadow effects. To simulate this visual feature, we apply a slight positional offset to the font and combine it with a light black color, thereby producing a shadow-like appearance beneath the primary text.

\textbf{Subtitle layout specification.} Subtitles are typically arranged horizontally in the lower region of the video frame, regardless of whether the video is in landscape or portrait orientation. However, vertically oriented subtitles do occur occasionally. Accordingly, we assign a probability of $75\%$ to horizontal layout and $25\%$ to vertical layout. When subtitles are arranged horizontally, they are centered along the horizontal axis while their vertical position is randomly assigned. Conversely, when subtitles are arranged vertically, they are centered along the vertical axis with a randomly determined horizontal placement.

\textbf{Transition effect definition for subtitle switching.} Subtitle transitions in video content typically occur via two primary mechanisms: (1) abrupt switching and (2) fade-in/fade-out effects. To simulate the latter, we modulate the text opacity over time, thereby achieving a smooth transitional appearance. Additionally, in music video (MV) scenarios, lyric text often exhibits dynamic color changes synchronized with temporal progression. We replicate this behavior to accommodate lyric erasure effects commonly observed in MV-style presentations.

\subsection{Long Video Inference}

\begin{figure}[htbp]
  \centering
  \begin{minipage}[t]{0.45\textwidth}
    \centering
    \includegraphics[width=\textwidth]{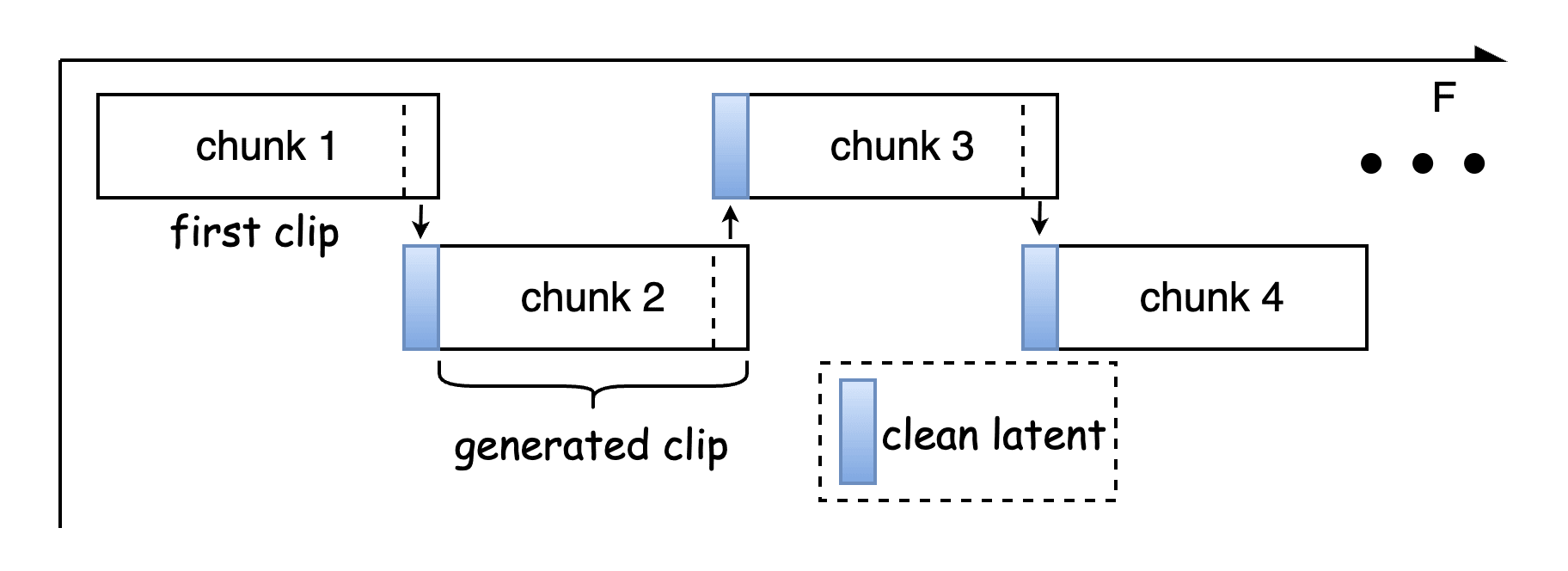}
    \caption{Long video inference strategy.}
    \label{fig:long_infer_startegy}
  \end{minipage}
  \hfill
  \begin{minipage}[t]{0.3\textwidth}
    \centering
    \includegraphics[width=\textwidth]{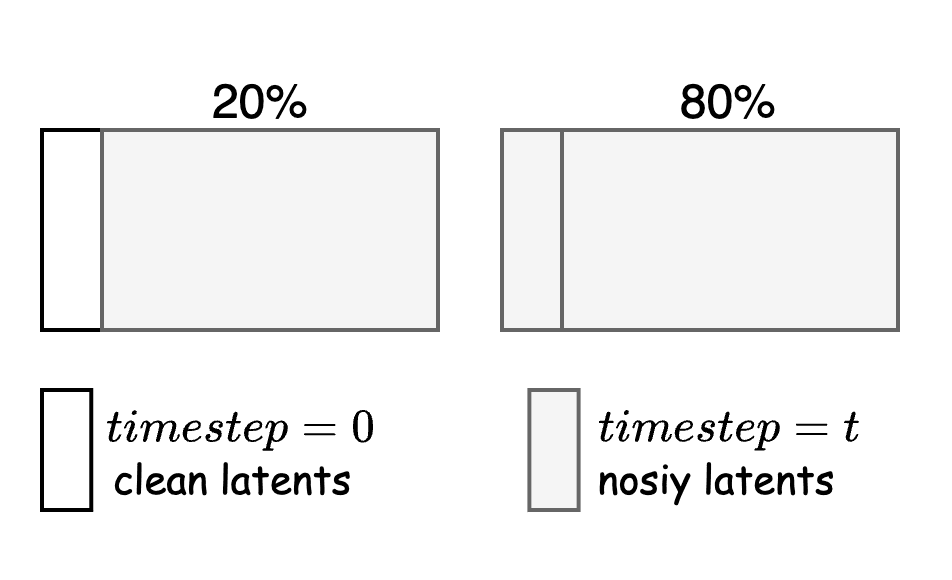}
    \caption{First-frame conditioning.}
    \label{fig:first-frame-cond}
  \end{minipage}
\end{figure}

In practical subtitle removal tasks, it is common to process videos exceeding $10$ minutes in duration, such as television dramas and animated content. We begin by applying scene detection algorithms TransNetV2~\cite{transnetv2} to segment the entire video into individual shots, followed by shot-wise processing. For short shots ($\le 5$ seconds), the model can handle them directly. However, for longer shots ($>5$ seconds), we adopt a chunk-wise processing strategy. The chunk size is determined based on the video resolution: specifically, $121$ frames for 720p videos, $65$ frames for 1080p videos, and $41$ frames for 1440p videos. To support any-length video processing, we set the first-frame conditioning, as shown in Figure~\ref{fig:first-frame-cond}. During the training phase, the noised latent representation of the first frame is replaced with its clean counterpart with a probability of $0.2$, and the corresponding diffusion timestep is set to zero. During inference, except for the first chunk, the first frame of each subsequent chunk is initialized using the last frame of the preceding chunk. Experimental results demonstrate that overlapping by a single frame is sufficient to maintain satisfactory temporal consistency. To reduce memory consumption, frame data is written into a TS (Transport Stream) video file in a streaming manner upon completion of each shot.

To reduce the burden of complex prompt formulation during inference, we employ a predefined prompt during training that omits the specific content of the subtitles. The prompt is defined as: \emph{``Please remove the subtitle text from the video while preserving the character appearance, background composition, and color style. Do not add any new elements.''} Since the current task is well-defined and the prompt remains unchanged, the prompt merely serves as a necessary input for the base model and has almost no impact on the video subtitle erasure task.

By default, our method performs automatic subtitle removal without requiring user interaction. However, in certain scenarios, users may wish to erase subtitles only within a specific region while preserving the rest of the content. Our model naturally supports this form of region-constrained editing. Given a user-specified binary mask $\mathbf{M}_\text{user}$, we perform region protection directly in the VAE latent space. The final output latent is computed as $\mathbf{z}_\text{out} = \text{SEDiT}\left ( \mathbf{z}_\text{ref}, \mathbf{z}_\text{in} \right ) \cdot \mathbf{M}_\text{latent} + \mathbf{z}_\text{ref} \cdot \left ( 1 - \mathbf{M}_\text{latent} \right ) $, where $\mathbf{M}_\text{latent} = \text{Interpolate}(\mathbf{M}_\text{user}, (F_{latent}, H_{latent}, W_{latent}))$ denotes the spatial-temporal interpolated mask aligned with the latent resolution. This formulation ensures that only the user-specified region is modified while the remaining content is faithfully preserved. As illustrated in Figure~\ref{fig:logo_protect}, we apply a region protection mechanism to the logo area, which allows greater flexibility in meeting user requirements. At present, \textbf{SEDiT} is well compatible with previous mask-based models.

\begin{figure}[!ht]
    \centering
    \includegraphics[width=0.48\textwidth]{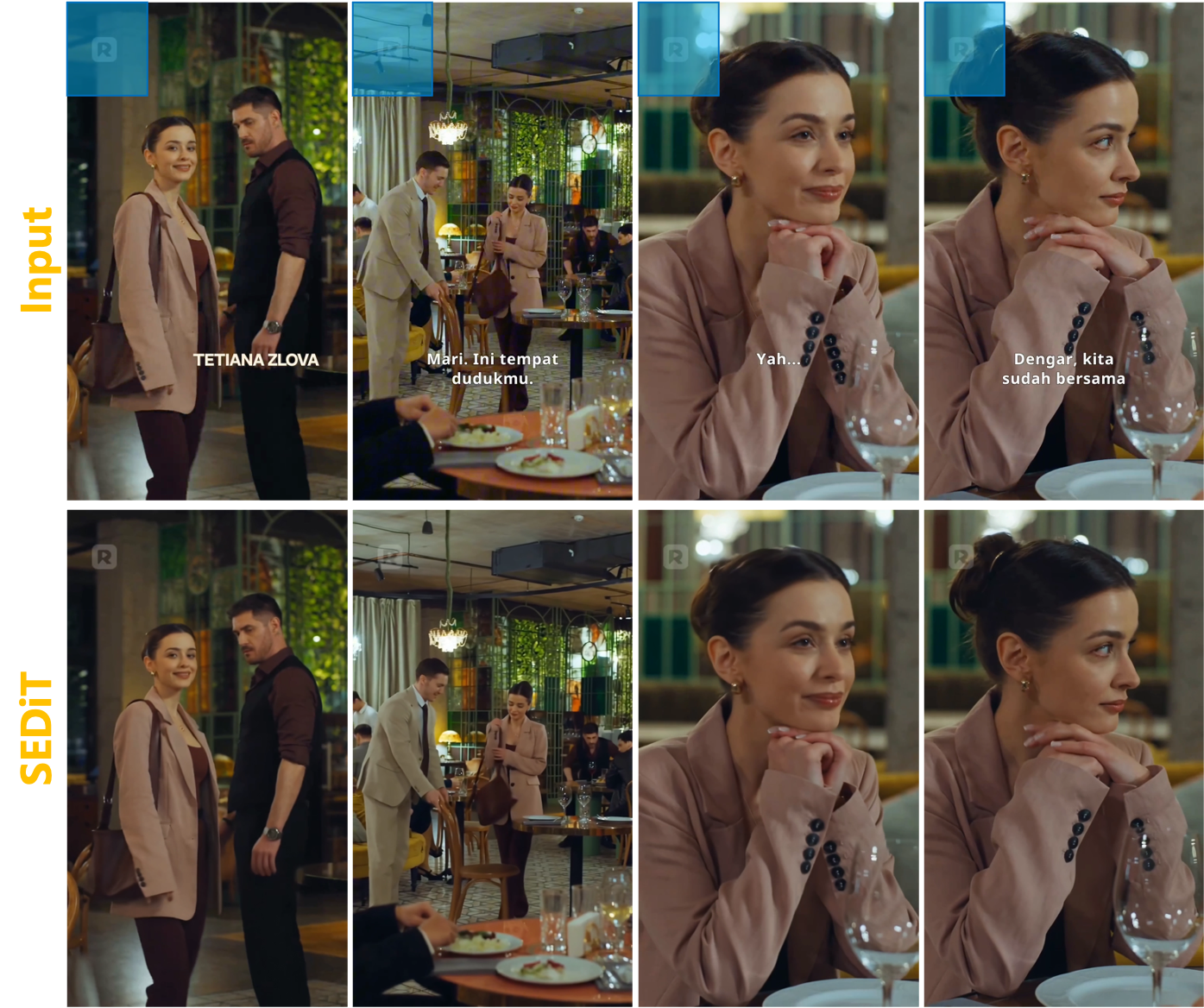}
    \caption{Example of Logo Protection. The blue box indicates the protected region. \emph{Best viewed when zoomed in.}}
    \label{fig:logo_protect}
\end{figure}

\subsection{Theoretical Justification for One-Step Sampling}
\label{sec:theory}

The empirical success of one-step inference in SEDiT is grounded in a 
theoretical analysis of the conditional rectified flow (CRF) framework 
applied to the subtitle erasure task.

\paragraph{Localized distributional shift.}
In subtitle removal, the reference video $z_\text{ref}$ and the target 
subtitle-free video differ only within the subtitle mask region $\mathcal{M}$. 
Formally, denoting the reference distribution as $q(z \mid z_\text{ref})$ 
and the target distribution as $p(z \mid z_\text{ref})$, the two 
distributions are identical outside $\mathcal{M}$. The conditional optimal 
transport (OT) map therefore decomposes as:
\begin{equation}
    T^*(z \mid z_\text{ref}) 
    = \bigl(T^*_{\mathcal{M}}(z_{\mathcal{M}},\, z_{\neg\mathcal{M}} \mid z_\text{ref}),\ 
             z_{\neg\mathcal{M}}\bigr),
    \label{eq:ot_decomp}
\end{equation}
where the non-subtitle region $z_{\neg\mathcal{M}}$ is preserved exactly by the OT map, reflecting the constraint that background content outside the subtitle region should remain unaltered.

\paragraph{Lipschitz continuity.}
Under a mild local Lipschitz assumption on $T^*_{\mathcal{M}}$ (see the Supplementary File for the complete proof), the conditional OT map satisfies:
\begin{equation}
    \bigl\|T^*(z^{(1)} \mid z_\text{ref}) - T^*(z^{(2)} \mid z_\text{ref})\bigr\| 
    \leq \sqrt{L_{\mathcal{M}}^2 + 1}\;\bigl\|z^{(1)} - z^{(2)}\bigr\|,
    \label{eq:lipschitz_ot}
\end{equation}
and the induced velocity field $v^*(z) = T^*(z \mid z_\text{ref}) - z$ 
is also Lipschitz with constant $L_T + 1$, where 
$L_T = \sqrt{L_{\mathcal{M}}^2 + 1}$.
Since $\mathcal{M}$ occupies only a small spatio-temporal region, 
$L_{\mathcal{M}}$ remains moderate, and the CRF path is globally 
near-linear with only localized curvature within $\mathcal{M}$.

\paragraph{Implication for one-step inference.}
The near-linearity of the conditional OT path means that a single 
Euler integration step:
\begin{equation}
    z = z_\text{noisy} + v_\theta(z_\text{noisy},\, t \mid z_\text{ref}) \cdot \Delta t
    \label{eq:one_step}
\end{equation}
suffices to accurately approximate the full flow from the noisy latent to the subtitle-free latent.
This provides a theoretical guarantee that one-step denoising is not merely an engineering shortcut, but a principled consequence of the localized and smooth structure inherent in the subtitle removal task. The complete formal proof is deferred to the Supplementary File.

\section{Experiments}\label{sec:experiments}
\subsection{Datasets}
We collected $400$K high-definition, subtitle-free videos from the Pexels website\footnote{\url{https://www.pexels.com/}} to serve as ground truth (GT) data. Embedded subtitle videos were subsequently generated on-the-fly using the data synthesis pipeline described in Section~\ref{sec:data-synth}.
Due to the absence of publicly available benchmark datasets for video subtitle removal, we construct a dataset of 400 samples to facilitate fair comparison across models. Each sample includes a clean video without subtitles, a corresponding subtitle mask, and a version with embedded subtitles. In the dataset, videos with 720p resolution contain $121$ frames, while those with 1080p resolution consist of $81$ frames. We name this dataset the Video Subtitle Removal Benchmark (VSR-Bench-400) dataset.

\subsection{Implementation details}
We adopt Low-Rank Adaptation (LoRA) with rank $256$. For the 2B-parameter LTX-Video-0.9.6 model, this adds just 381M trainable parameters ($19\%$ of the base model). Training uses a batch size of $32$. For input resolutions close to $1080 \times 1920$, the input frame length is set to $65$; for resolutions near $1280 \times 720$, the input length is set to $121$. The training process comprises $100K$ iterations on 8 NVIDIA A800 (80GB) GPUs, utilizing AdamW optimizer with an initial learning rate of $2 \times 10 ^ {-4}$. $\alpha$ is experimentally set to $5$ in Equation~\ref{eq:loss}. During inference, we experimentally use one sampling step without model distillation. This is primarily attributed to the strong conditional video input provided to the model, whereby regions outside the subtitle area are largely reconstructed through a \emph{copy-paste mechanism}.

\begin{figure}[!t]
    \centering
    \includegraphics[width=0.5\textwidth]{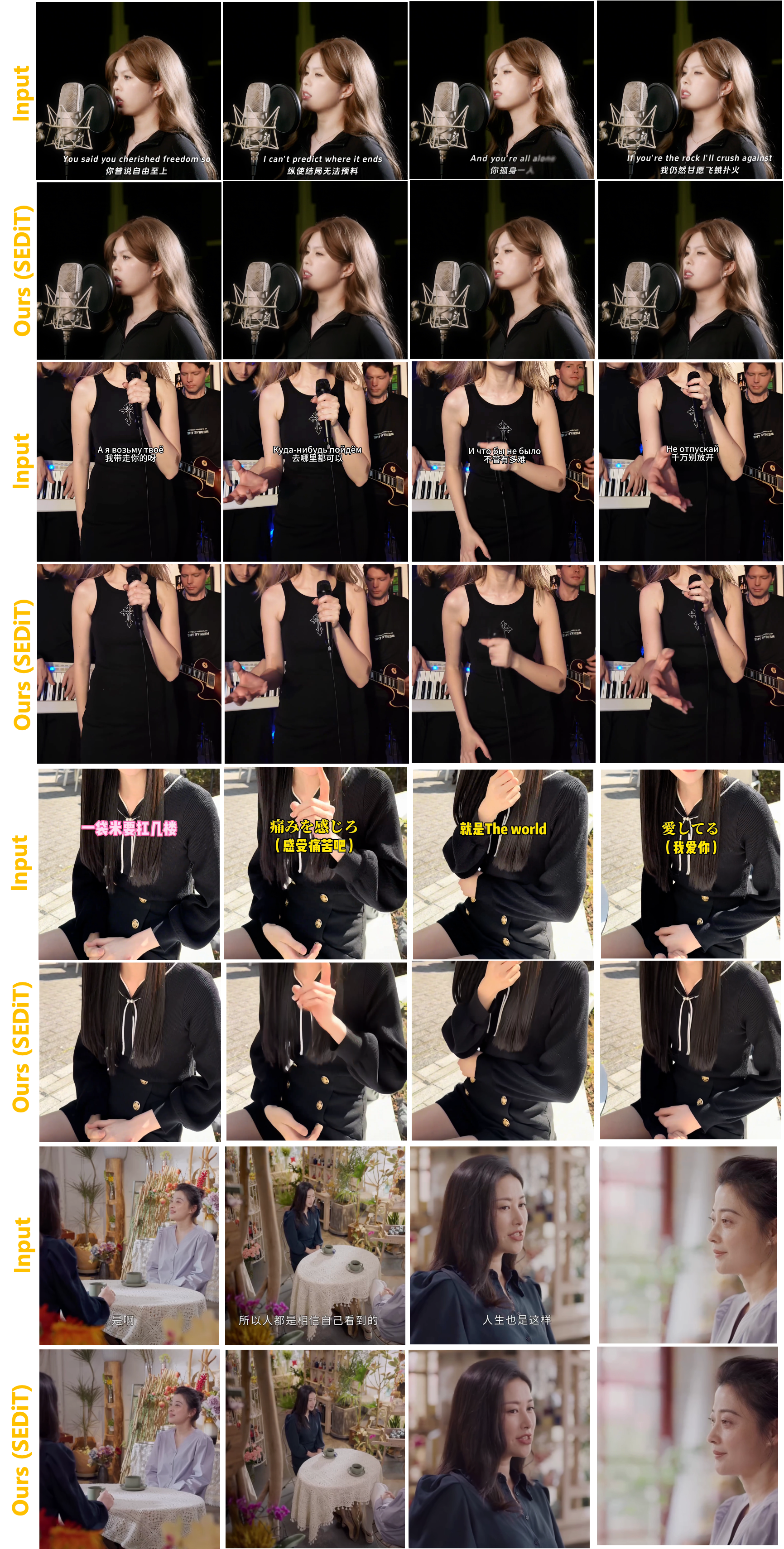}
    \caption{The evaluation results on multilingual subtitles include cases with mixed Chinese-English subtitles, mixed Chinese-Russian subtitles, Japanese subtitles, and Chinese-only subtitles. \emph{Best viewed when zoomed in.}}
    \label{fig:multi-language_demo}
\end{figure}


\begin{table*}[htbp]
\centering
\caption{Qualitative results on the VSR-Bench-400 dataset. ``Time'' indicates the average inference time per video with a resolution of $1920 \times 1080$ with $65$ frames \textbf{without any acceleration method} on A800. The inference time only accounts for the algorithm itself, excluding the time for video reading and writing. ``N/A'' denotes not available. Mask-based approaches adopt \textbf{GT masks}. The evaluation is conducted directly on the raw outputs of the model, without any post-processing operations added.}
\label{tab:metrics}
\begin{tabular}{lcccccc}
\Xhline{1.2pt}
\multicolumn{1}{c}{} & \multicolumn{5}{c}{\textbf{VSR-Bench-400}} \\
\makecell{\textbf{Method}} & \textbf{PSNR} ↑ & \textbf{SSIM} ↑ & \textbf{LPIPS} ↓ & \textbf{FVD} ↓ & \textbf{MOS} ↑ & \textbf{Time} ↓\\
\Xhline{0.8pt}
\hline
\makecell{Propainter} & 26.8198 & 0.8231 & 0.1778 & 64.9960 & N/A & 38s \\
\makecell{DiffuEraser} & 27.5084 & 0.8146 & 0.1223 & 56.7705 & N/A & 166s \\
\makecell{Minimax-Remover (6-step) }     & 28.3109           & 0.8785          & 0.1011     &    39.3450   & 2.5      &   150s \\
\hline
\makecell{SEDiT (4-step)} & 31.2961 & 0.8783 & 0.1001 & 26.7560 & N/A & 8s \\
\makecell{SEDiT (2-step)} & \cellcolor{pink}31.5968 & \cellcolor{pink}0.8805 & 0.0982 & 24.9137 & N/A & 6s \\
\makecell{SEDiT (1-step)}         & 31.5863           & 0.8805           & \cellcolor{pink}0.0981      &   \cellcolor{pink}24.0599  & \cellcolor{pink}4.5     &   \cellcolor{pink}4s \\
\Xhline{1.2pt}
\end{tabular}
\end{table*}

\subsection{Quantitative experiments}
We compare our method against Minimax-Remover~\cite{minimax-remover}, a leading open-source video object removal approach based on the DiT architecture. Besides, we compare the representative GAN-based method Propainter~\cite{propainter} and the diffusion-UNet-based method Diffueraser~\cite{diffueraser}. 
For the mask-based methods, we use subtitle boundary filling to generate rectangular masks (GT masks). The masked video input to the model ($V_\text{masked} = V \cdot (\mathbf{1} - \mathbf{M}) + \mathbf{0} \cdot \mathbf{M}$) does not contain visible subtitles and is independent of the subtitle rendering process. In addition, the VSR-Bench-400 dataset employs original videos different from those used in training, specifically to avoid unfair comparisons caused by potential data leakage.
For visual quality assessment, we employ SSIM (Structural Similarity Index Measure), LPIPS(Learned Perceptual Image Patch Similarity) to measure frame-level fidelity. To assess the perceptual quality of subtitle removal results, we conducted a Mean Opinion Score (MOS) evaluation involving $20$ human participants. As presented in Table~\ref{tab:metrics}, our proposed method consistently outperforms Minimax-Remover across all considered evaluation metrics. Notably, our method exhibits significantly superior performance in both inference time and average MOS. We experimented with sampling steps of 1, 2, and 4. Based on objective metrics, both $step=2$ and $step=1$ have their respective advantages. Considering efficiency, we ultimately adopted the $step=1$ configuration.

\subsection{Qualitative experiments}

As shown in Figure~\ref{fig:multi-language_demo}, the proposed method demonstrates effective and seamless subtitle removal. In the third column of the top-left example, despite the presence of pronounced subtitle-specific visual effects, our approach successfully eliminates the text without leaving visible traces. Such stylized and blurred subtitles are typically challenging for conventional OCR algorithms, which often suffer from low recognition accuracy and missed detections.

The bottom-left example highlights the robustness of our model: even though Russian subtitles were not included in the training data, the method still achieves high-quality removal. The top-right case further illustrates the generalization capability of our approach when handling colorful, bordered subtitles that occupy a large portion of the frame and obscure facial regions.

In the bottom-right example, the model excels at restoring complex textures, such as patterned tablecloths. For frames without subtitles, our method accurately identifies the absence of text and avoids unnecessary alterations. Overall, the results yield visually coherent reconstructions with minimal artifacts.

\begin{figure}[ht]
    \centering
    \includegraphics[width=0.5\textwidth]{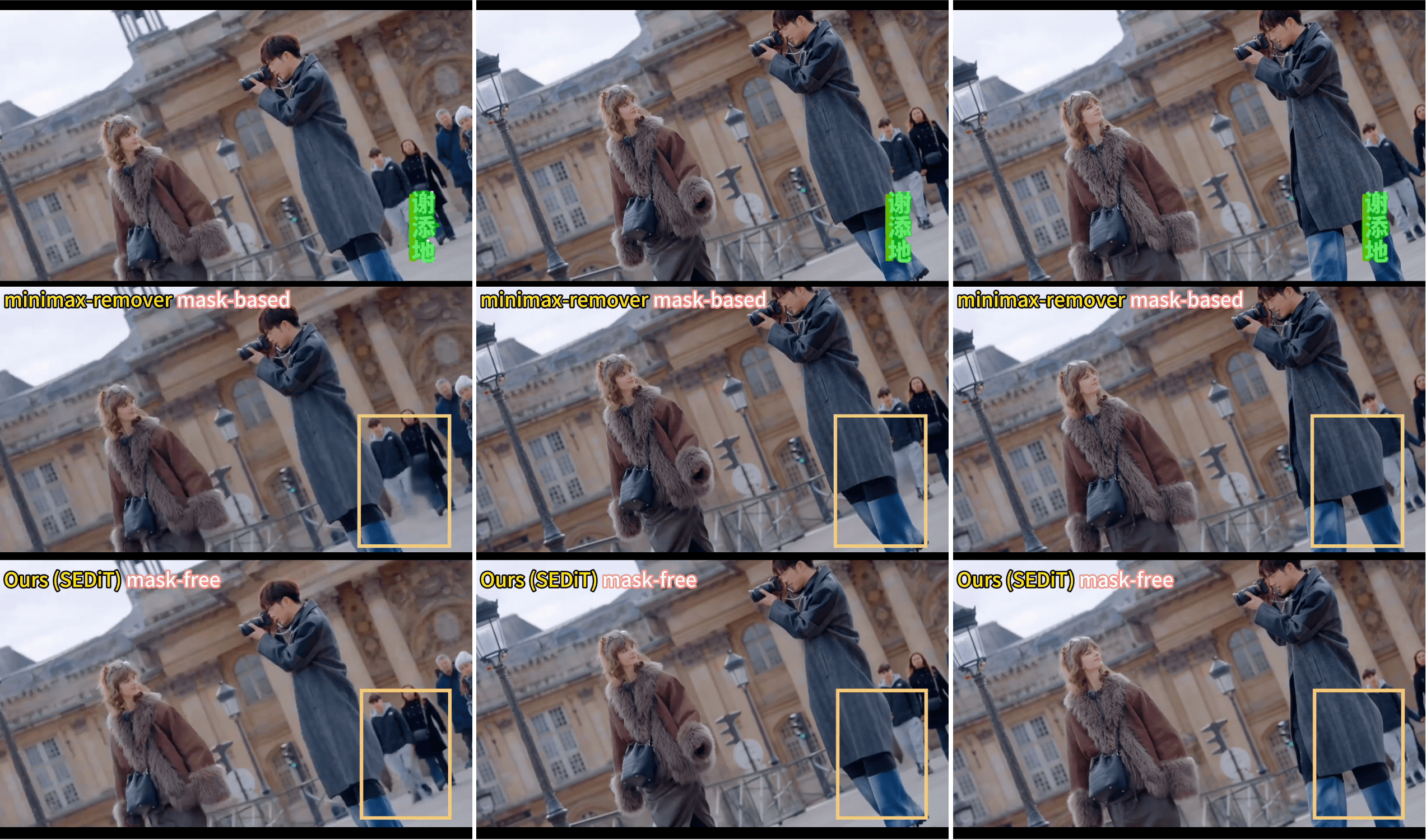}
    \caption{Visual comparison with state-of-the-art (SOTA) methods. The green highlighted region represents the subtitle mask, which is obtained using the SAM2 model. Focus on the visual comparison within the yellow boxes. \emph{Best viewed when zoomed-in.}}
    \label{fig:compare_demo1}
\end{figure}

\begin{figure}[ht]
    \centering
    \includegraphics[width=0.5\textwidth]{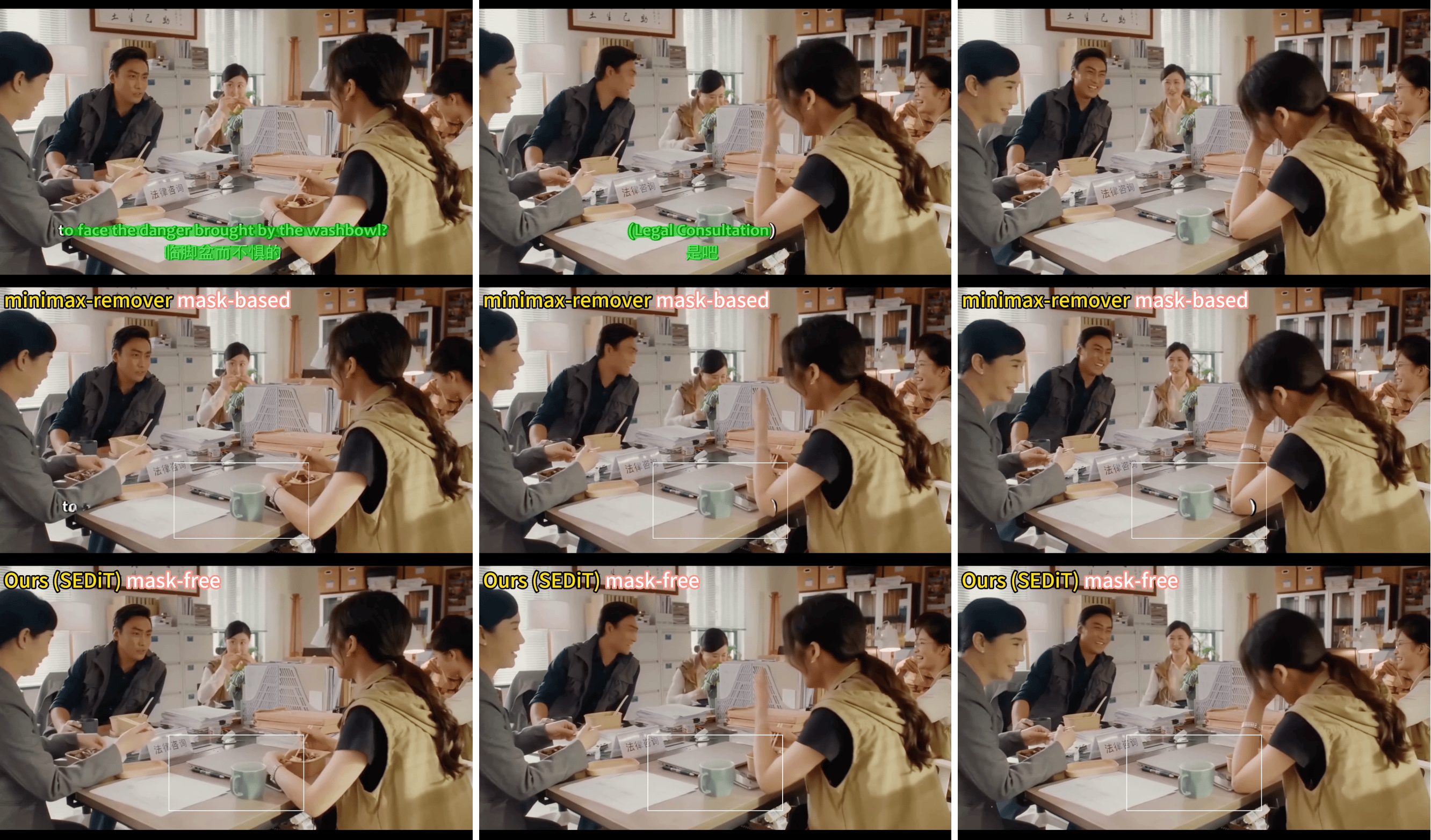}
    \caption{Visual comparison with state-of-the-art (SOTA) methods. The green highlighted region represents the subtitle mask, which is obtained using the SAM2 model. Focus on the visual comparison within the white boxes. \emph{Best viewed when zoomed-in.}}
    \label{fig:compare_demo2}
\end{figure}

To compare our approach with state-of-the-art mask-based video inpainting methods under real-world conditions, we utilize an interactive interface to obtain subtitle locations and apply SAM2~\cite{sam2} for subtitle tracking. The resulting masks are then fed into the Minimax-Remover~\cite{minimax-remover} method to generate subtitle removal results. Compared to traditional OCR-based subtitle box extraction, this segmentation-based strategy yields more precise subtitle regions, which benefits mask-guided inpainting methods.

As illustrated in Figure~\ref{fig:compare_demo1}, our mask-free approach successfully reconstructs occluded body parts of distant characters, whereas the baseline method introduces noticeable blurring artifacts. The final column of Figure~\ref{fig:compare_demo1} further shows that, in cases of mild occlusion, the baseline method can still produce reasonably good reconstructions.

As shown in Figure~\ref{fig:compare_demo2}, when the subtitle mask is inaccurately defined, mask-based methods are prone to error propagation. Comparing the second and third columns, the incomplete mask in the second column results in residual subtitle artifacts in the generated output. This error propagates further to the third column, even though the input frame in that case contains no subtitles. \textbf{Additional high-resolution visual results are provided in Figure~\ref{fig:vsr_bench_1}, Figure~\ref{fig:vsr_bench_2} and Figure~\ref{fig:vsr_bench_3}. }

\subsection{Ablation study}
\begin{figure}[t]
    \centering
    \includegraphics[width=0.5\textwidth]{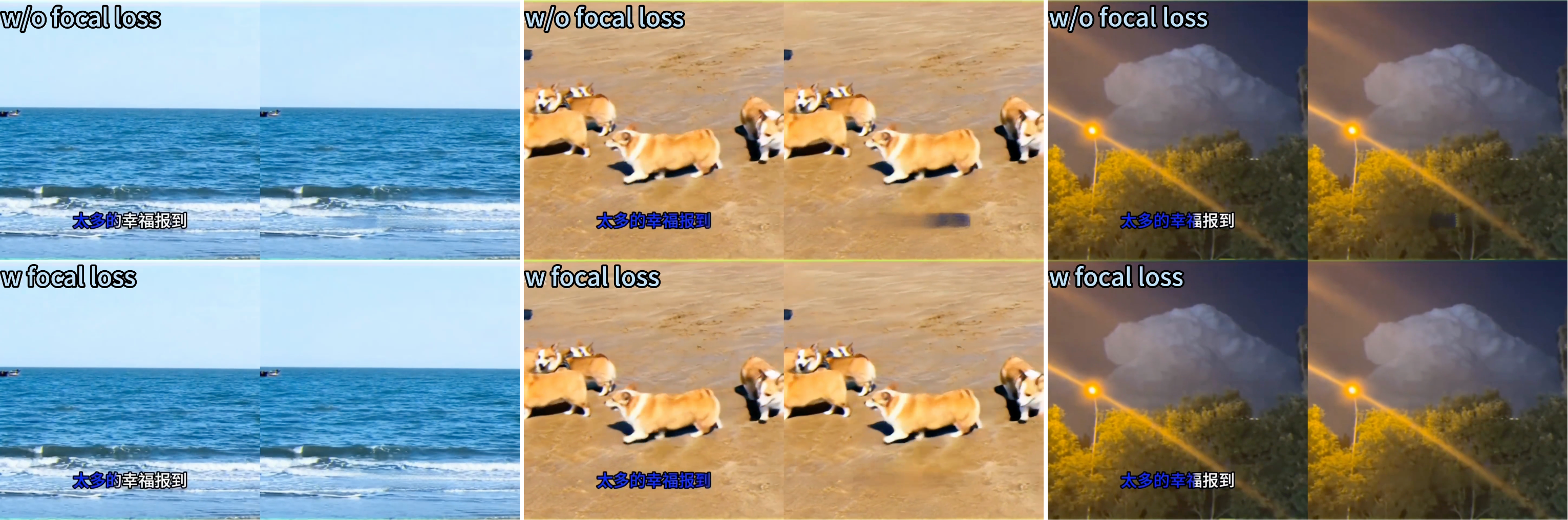}
    \caption{Ablation study for the focal loss. ``w/o focal loss'' denotes ``without focal loss''. ``w focal loss'' indicates ``with focal loss''.}
    \label{fig:ablation_study}
\end{figure}

\begin{table}[h]
\centering
\caption{Quantitative comparison of conditioning methods}
\begin{tabular}{lcc}
\toprule
\textbf{Metric} & \textbf{In-context} & \textbf{Channel-conatenation} \\
\midrule
Training steps & 3,000 & 3,000 \\
LoRA rank & 256 & 256 \\
Inference steps & 1 & 1 \\
\midrule
PSNR $\uparrow$ & \cellcolor{pink}31.2999 & 27.8222 \\
SSIM $\uparrow$ & \cellcolor{pink}0.8784 & 0.8392 \\
LPIPS $\downarrow$ & \cellcolor{pink}0.0998 & 0.1247 \\
FVD $\downarrow$ & \cellcolor{pink}26.4344 & 51.8858 \\
\bottomrule
\end{tabular}
\label{tab:conditioning_comparison}
\end{table}

\textbf{Effectiveness of focal loss.} We experiment by setting $\alpha = 0$ in Equation~\ref{eq:loss}, which indicates that we directly use the original conditional rectified flow-matching loss to train our model. 
The experiment presented in Figure~\ref{fig:ablation_study} indicates that this approach does not completely remove subtitles when dealing with complex styles such as gradient-colored text. By incorporating focal loss, the model is encouraged to concentrate more effectively on subtitle regions, resulting in significantly improved removal performance.

\begin{figure}[t]
    \centering
    \includegraphics[width=0.48\textwidth]{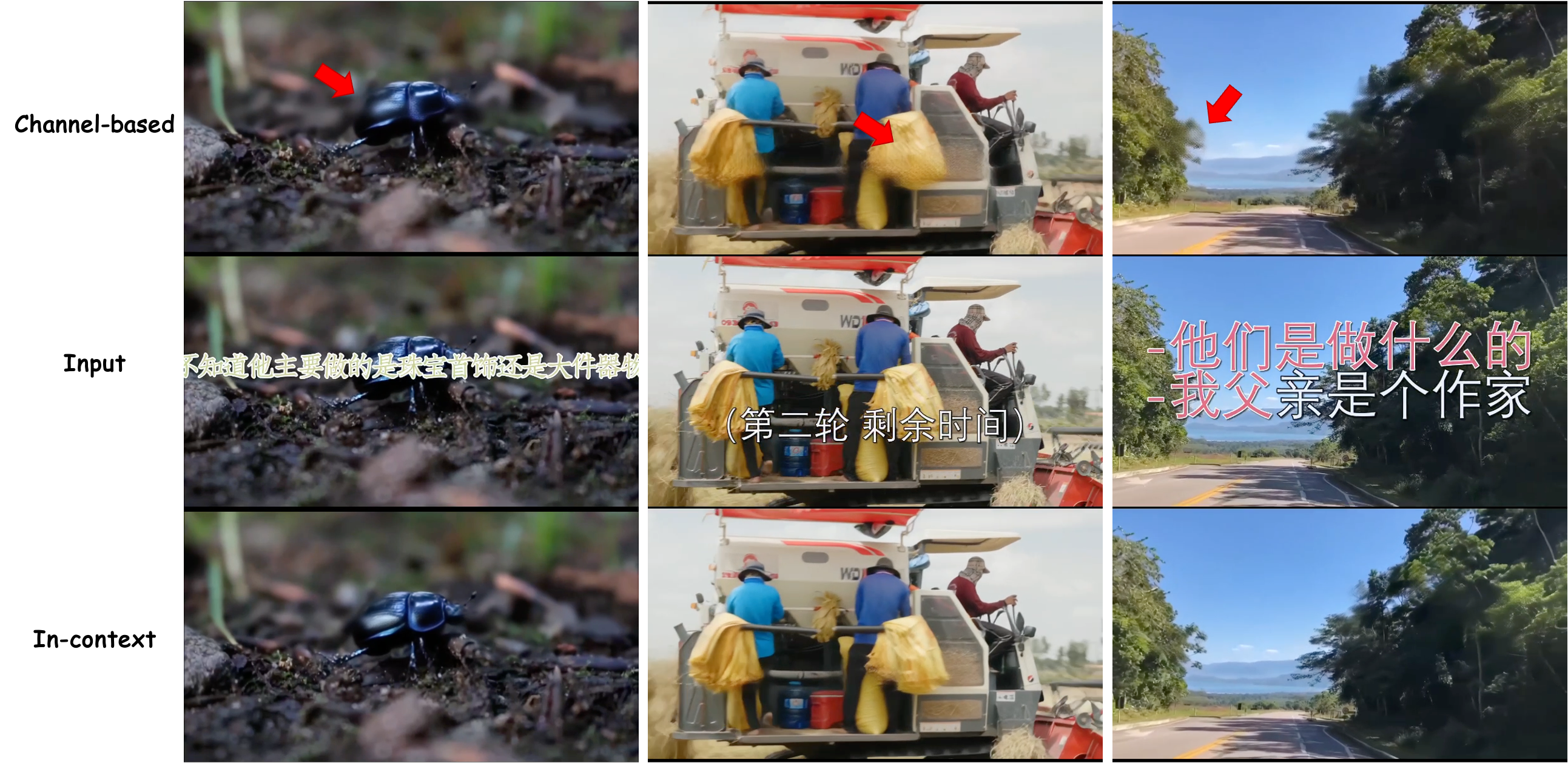}
    \caption{Channel-concatenation conditioning tends to introduce noticeable blurring or artifacts, whereas in‑context conditioning can reconstruct the background content occluded by subtitles more faithfully.}
    \label{fig:condition_style}
\end{figure}

\textbf{Comparison of Condition Injection Methods.} As illustrated in Table~\ref{tab:conditioning_comparison}, we compare two approaches for injecting conditions: in-context and channel-concatenation. Across all four metrics (PSNR, SSIM, LPIPS, and FVD), in‑context conditioning consistently outperforms channel‑concatenation conditioning. Since both methods use the same 1‑step sampling strategy, the performance gap cannot be attributed to sampling differences, providing strong empirical support for our theoretical claim that 1‑step diffusion is well‑suited for localized video editing tasks. We also provide a visual comparison between the two conditioning strategies (see Fig~\ref{fig:condition_style}). The results show that the channel‑concatenation approach often introduces noticeable blurring or artifacts around the edited regions, whereas in‑context conditioning more faithfully reconstructs the background content originally occluded by subtitles.

\section{Limitations}

While SEDiT demonstrates strong performance in removing subtitles across most video scenarios, there are still some limitations. Specifically, we found that SEDiT may fail to fully remove static subtitles in extremely short shots (less than $10$ frames). We observe that the current model is unable to remove text with severe motion blur, as illustrated in Figure~\ref{fig:failure_case}(the first two rows). Nevertheless, for typical clear subtitles, satisfactory removal results can be obtained even with relatively large font sizes.

\begin{figure}[t]
    \centering
    \includegraphics[width=0.48\textwidth]{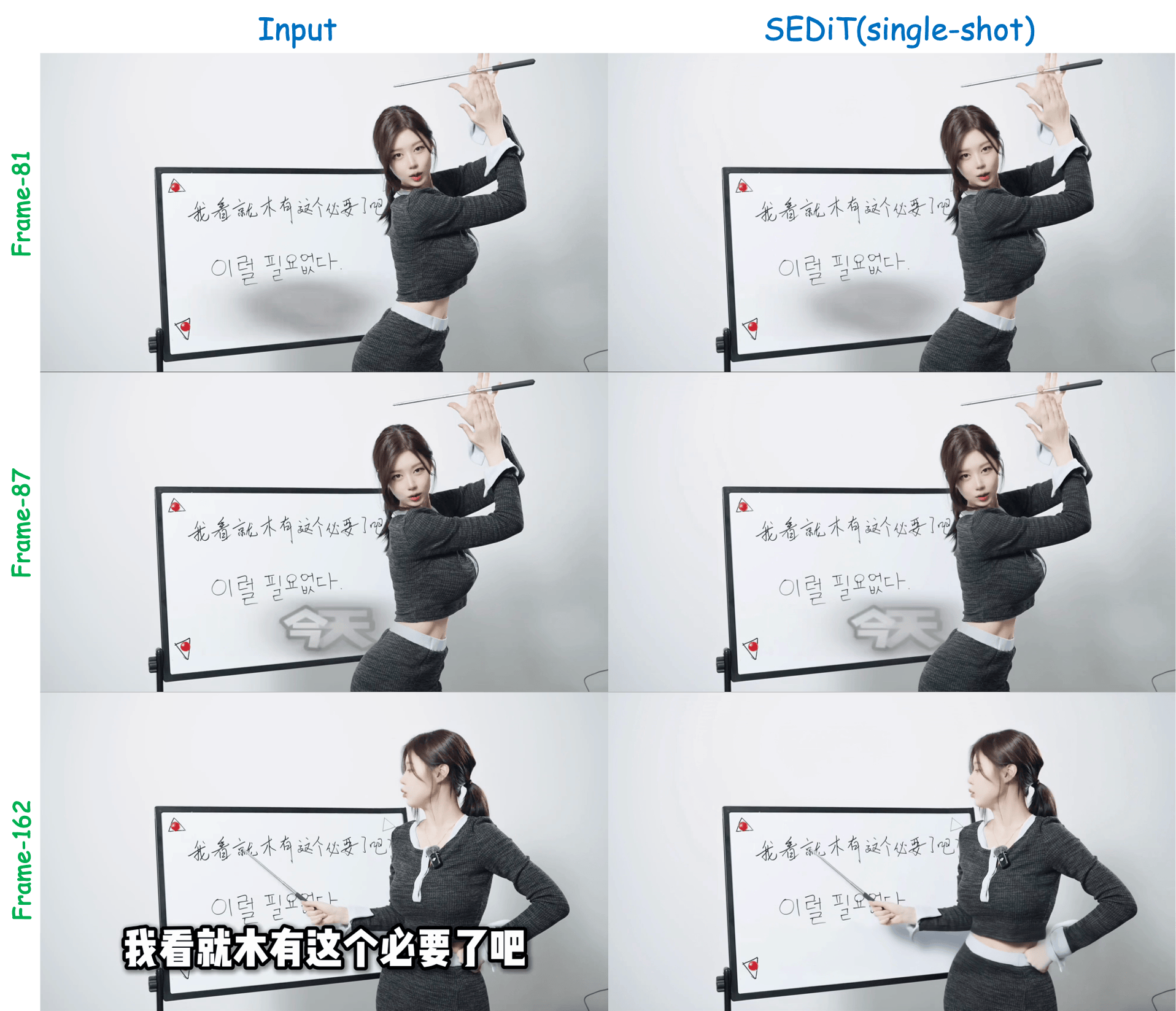}
    \caption{Visual results of processing severely motion-blurred text.}
    \label{fig:failure_case}
\end{figure}

While SEDiT demonstrates strong performance in removing subtitles across most video scenarios, there are still some limitations. Specifically, we found that SEDiT may fail to fully remove static subtitles in extremely short shots (less than $10$ frames). We observe that the current model is unable to remove text with severe motion blur, as illustrated in Figure~\ref{fig:failure_case}(the first two rows). Nevertheless, for typical clear subtitles, satisfactory removal results can be obtained even with relatively large font sizes.

\section{Conclusion}

We propose \textbf{SEDiT}, a lightweight, one-stage, mask-free video editing framework for high-field video subtitle erasure. Unlike previous mask-based video inpainting methods, our approach eliminates the need for explicit masks, thereby bypassing the video mask segmentation process entirely. Furthermore, our method preserves the architecture of the underlying generative model, enabling efficient fine-tuning via LoRA. This design choice also allows for seamless integration with higher-quality video generation backbones to further enhance performance. 

To support subtitle removal in videos of arbitrary length, we adopt a chunk-wise processing strategy, dynamically adjusting the chunk size based on the input resolution. To mitigate temporal discontinuities across chunks, we probabilistically inject the initial frame as a conditioning signal. Benefiting from this strong conditional guidance, frame repetition across chunks achieves temporal consistency with minimal artifacts. 

During inference, our method requires only one step to produce high-quality results. With the high compression ratio of the base VAE model, our approach can directly process 1080p-resolution videos, completing 65 frames in just $4$ seconds—making it particularly suitable for large-scale deployment.

\bibliography{example_paper}

@article{stable-video-diffusion,
  title={Stable video diffusion: Scaling latent video diffusion models to large datasets},
  author={Blattmann, Andreas and Dockhorn, Tim and Kulal, Sumith and Mendelevitch, Daniel and Kilian, Maciej and Lorenz, Dominik and Levi, Yam and English, Zion and Voleti, Vikram and Letts, Adam and others},
  journal={arXiv preprint arXiv:2311.15127},
  year={2023}
}

@inproceedings{dit,
  title={Scalable diffusion models with transformers},
  author={Peebles, William and Xie, Saining},
  booktitle={Proceedings of the IEEE/CVF international conference on computer vision},
  pages={4195--4205},
  year={2023}
}

@article{wan21,
  title={Wan: Open and advanced large-scale video generative models},
  author={Wan, Team and Wang, Ang and Ai, Baole and Wen, Bin and Mao, Chaojie and Xie, Chen-Wei and Chen, Di and Yu, Feiwu and Zhao, Haiming and Yang, Jianxiao and others},
  journal={arXiv preprint arXiv:2503.20314},
  year={2025}
}

@article{hunyuanvideo,
  title={Hunyuanvideo: A systematic framework for large video generative models},
  author={Kong, Weijie and Tian, Qi and Zhang, Zijian and Min, Rox and Dai, Zuozhuo and Zhou, Jin and Xiong, Jiangfeng and Li, Xin and Wu, Bo and Zhang, Jianwei and others},
  journal={arXiv preprint arXiv:2412.03603},
  year={2024}
}

@article{ltx-video,
  title={Ltx-video: Realtime video latent diffusion},
  author={HaCohen, Yoav and Chiprut, Nisan and Brazowski, Benny and Shalem, Daniel and Moshe, Dudu and Richardson, Eitan and Levin, Eran and Shiran, Guy and Zabari, Nir and Gordon, Ori and others},
  journal={arXiv preprint arXiv:2501.00103},
  year={2024}
}

@article{eraserdit,
  title={EraserDiT: Fast Video Inpainting with Diffusion Transformer Model},
  author={Liu, Jie and Hui, Zheng},
  journal={arXiv preprint arXiv:2506.12853},
  year={2025}
}

@article{minimax-remover,
  title={MiniMax-Remover: Taming Bad Noise Helps Video Object Removal},
  author={Zi, Bojia and Peng, Weixuan and Qi, Xianbiao and Wang, Jianan and Zhao, Shihao and Xiao, Rong and Wong, Kam-Fai},
  journal={arXiv preprint arXiv:2505.24873},
  year={2025}
}

@article{diffueraser,
  title={Diffueraser: A diffusion model for video inpainting},
  author={Li, Xiaowen and Xue, Haolan and Ren, Peiran and Bo, Liefeng},
  journal={arXiv preprint arXiv:2501.10018},
  year={2025}
}

@inproceedings{propainter,
  title={Propainter: Improving propagation and transformer for video inpainting},
  author={Zhou, Shangchen and Li, Chongyi and Chan, Kelvin CK and Loy, Chen Change},
  booktitle={Proceedings of the IEEE/CVF international conference on computer vision},
  pages={10477--10486},
  year={2023}
}

@article{qwen2.5-vl,
  title={Qwen2.5-vl technical report},
  author={Bai, Shuai and Chen, Keqin and Liu, Xuejing and Wang, Jialin and Ge, Wenbin and Song, Sibo and Dang, Kai and Wang, Peng and Wang, Shijie and Tang, Jun and others},
  journal={arXiv preprint arXiv:2502.13923},
  year={2025}
}

@article{sam2,
  title={Sam 2: Segment anything in images and videos},
  author={Ravi, Nikhila and Gabeur, Valentin and Hu, Yuan-Ting and Hu, Ronghang and Ryali, Chaitanya and Ma, Tengyu and Khedr, Haitham and R{\"a}dle, Roman and Rolland, Chloe and Gustafson, Laura and others},
  journal={arXiv preprint arXiv:2408.00714},
  year={2024}
}

@article{flux-kontext,
  title={FLUX. 1 Kontext: Flow Matching for In-Context Image Generation and Editing in Latent Space},
  author={Labs, Black Forest and Batifol, Stephen and Blattmann, Andreas and Boesel, Frederic and Consul, Saksham and Diagne, Cyril and Dockhorn, Tim and English, Jack and English, Zion and Esser, Patrick and others},
  journal={arXiv preprint arXiv:2506.15742},
  year={2025}
}

@article{qwen-image,
  title={Qwen-image technical report},
  author={Wu, Chenfei and Li, Jiahao and Zhou, Jingren and Lin, Junyang and Gao, Kaiyuan and Yan, Kun and Yin, Sheng-ming and Bai, Shuai and Xu, Xiao and Chen, Yilei and others},
  journal={arXiv preprint arXiv:2508.02324},
  year={2025}
}

@inproceedings{instructpix2pix,
  title={Instructpix2pix: Learning to follow image editing instructions},
  author={Brooks, Tim and Holynski, Aleksander and Efros, Alexei A},
  booktitle={Proceedings of the IEEE/CVF conference on computer vision and pattern recognition},
  pages={18392--18402},
  year={2023}
}

@inproceedings{transnetv2,
  title={Transnet v2: An effective deep network architecture for fast shot transition detection},
  author={Soucek, Tom{\'a}s and Lokoc, Jakub},
  booktitle={Proceedings of the 32nd ACM International Conference on Multimedia},
  pages={11218--11221},
  year={2024}
}

@article{rope,
  title={Roformer: Enhanced transformer with rotary position embedding},
  author={Su, Jianlin and Ahmed, Murtadha and Lu, Yu and Pan, Shengfeng and Bo, Wen and Liu, Yunfeng},
  journal={Neurocomputing},
  volume={568},
  pages={127063},
  year={2024},
  publisher={Elsevier}
}

@article{ditto-1m,
  title={Scaling Instruction-Based Video Editing with a High-Quality Synthetic Dataset},
  author={Bai, Qingyan and Wang, Qiuyu and Ouyang, Hao and Yu, Yue and Wang, Hanlin and Wang, Wen and Cheng, Ka Leong and Ma, Shuailei and Zeng, Yanhong and Liu, Zichen and others},
  journal={arXiv preprint arXiv:2510.15742},
  year={2025}
}

@article{paddleocr-vl,
  title={Paddleocr-vl: Boosting multilingual document parsing via a 0.9 b ultra-compact vision-language model},
  author={Cui, Cheng and Sun, Ting and Liang, Suyin and Gao, Tingquan and Zhang, Zelun and Liu, Jiaxuan and Wang, Xueqing and Zhou, Changda and Liu, Hongen and Lin, Manhui and others},
  journal={arXiv preprint arXiv:2510.14528},
  year={2025}
}

@article{CLEAR,
  title={CLEAR: Context-Aware Learning with End-to-End Mask-Free Inference for Adaptive Video Subtitle Removal},
  author={He, Qingdong and Wang, Chaoyi and Tang, Peng and Yang, Yifan and Hu, Xiaobin},
  journal={arXiv preprint arXiv:2603.21901},
  year={2026}
}

@article{kiwi-edit,
  title={Kiwi-Edit: Versatile Video Editing via Instruction and Reference Guidance},
  author={Lin, Yiqi and Liang, Guoqiang and Zeng, Ziyun and Bai, Zechen and Chen, Yanzhe and Shou, Mike Zheng},
  journal={arXiv preprint arXiv:2603.02175},
  year={2026}
}
\bibliographystyle{icml2026}

\newpage
\appendix
\onecolumn
\section*{Appendix}

\section{One-step Sampling Analysis}




\subsection{The Comparisons of Different Inference Steps}
Video subtitle removal is a strongly constrained video editing task in which the input and target videos differ only slightly. Our experiments show that satisfactory results can be achieved with a limited number of sampling steps during inference. Therefore, we investigate the effect of the sampling step count on the performance of subtitle removal. As illustrated in Figure~\ref{fig:sample_steps}, setting the sampling steps to 1, 2, or 4 has minimal impact on the visual results. The configuration with $step=1$ effectively restores complex clothing textures.

\begin{figure*}[htbp]
    \centering
    \includegraphics[width=0.95\textwidth]{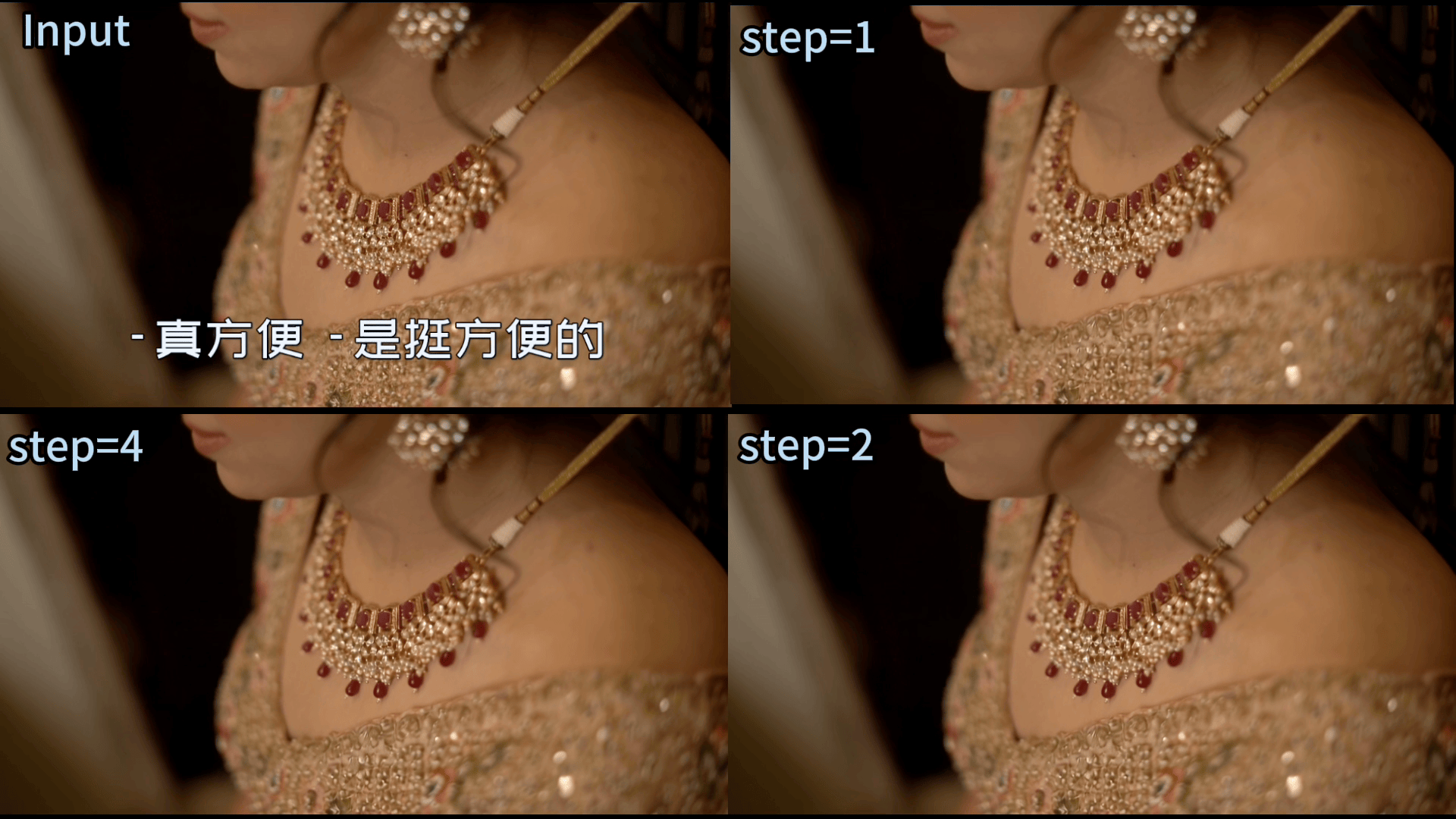}
    \caption{Comparison of visual effects under different sampling steps.}
    \label{fig:sample_steps}
\end{figure*}

We provide additional comparisons in Table~\ref{tab:video_edit_comparison}, including a 30-step SEDiT, Kiwi-Edit~\cite{kiwi-edit} (a general video editing model), and CLEAR~\cite{CLEAR} (a video subtitle removal model).

\begin{table}[htbp]
\centering
\caption{Quantitative comparison of video editing methods.}
\label{tab:video_edit_comparison}
\begin{tabular}{lcccc}
\toprule
Method & PSNR $\uparrow$ & SSIM $\uparrow$ & LPIPS $\downarrow$ & FVD $\downarrow$ \\
\midrule
SEDiT (1-step)      & \cellcolor{pink}31.5863 & \cellcolor{pink}0.8805 & \cellcolor{pink}0.0981 & \cellcolor{pink}24.0599 \\
SEDiT (4-step)      & 31.2961 & 0.8783 & 0.1001 & 26.7560 \\
SEDiT (30-step)     & 30.2953 & 0.8626 & 0.1072 & 31.7589 \\
Kiwi-Edit (1-step)  & 29.0176 & 0.8739 & 0.1038 & 34.3346 \\
Kiwi-Edit (30-step) & 28.6635 & 0.8725 & 0.1045 & 31.4693 \\
CLEAR (5-step) & 19.3893 & 0.7182 & 0.2379 & 167.6627 \\
\bottomrule
\end{tabular}
\end{table}

We observe that:
\begin{itemize}
    \item 30-step sampling does not improve quality; instead, accumulated errors degrade performance.
    \item Kiwi-Edit shows the same trend: 30-step is worse than 1-step except for FVD.
    \item This is consistent with our 1-step and 4-step results.
\end{itemize}

\subsection{Theoretical analysis: conditional rectified flow for subtitle removal}
In the subtitle removal task, the input consists of a reference video latent $\mathbf{z}_\text{ref}$, which itself contains subtitles, and a noisy target latent $\mathbf{z}_\text{noisy}$. Unlike unconditional diffusion models, we adopt a \textbf{Conditional Rectified Flow (CRF)} framework, where the generative process is modeled as learning a conditional velocity field. Formally, the velocity field is defined as
\begin{equation}
    v_{\theta }\left ( \mathbf{z},t \mid \mathbf{z}_\text{ref} \right ) \approx v^{*} \left ( \mathbf{z},t \mid \mathbf{z}_\text{ref} \right ) ,
\end{equation}
with $v^{*}$ denoting the true velocity induced by the conditional optimal transport path. The training objective minimizes the discrepancy between the predicted and true velocity fields:
\begin{equation}
    \min_{\theta }\mathbb{E}_{\mathbf{z},t}\left [ \left \| v_{\theta }\left ( \mathbf{z},t \mid \mathbf{z}_\text{ref} \right ) - v^{*}\left ( \mathbf{z},t \mid \mathbf{z}_\text{ref} \right )  \right \| ^2 \right ].
\end{equation}
The model must therefore learn to accurately identify and suppress subtitle regions while maintaining temporal consistency in the surrounding non-subtitle areas. This setting implies that the divergence between the target distribution (subtitle-free video) and the reference distribution (video with subtitles) is concentrated within a small masked region $\mathbf{M}$, while the remainder of the frame remains nearly identical. Consequently, the conditional optimal transport path is globally close to linear, requiring only localized corrections in $\mathbf{M}$.

According to the forward derivation of Rectified Flow, when the velocity field is approximately linear, and the perturbation is spatially limited, a single integration step suffices to approximate the true flow:
\begin{equation}
    \mathbf{z} = \mathbf{z}_\text{noisy} + v_{\theta }\left ( \mathbf{z}_\text{noisy},t \mid \mathbf{z}_\text{ref} \right ) \cdot \Delta t .
\end{equation}
This formulation demonstrates that, under the CRF framework, the noisy target latent can be directly mapped to the subtitle-free latent in one step, guided by the conditional velocity field. The theoretical implication is that \textbf{one-step denoising is sufficient and effective for subtitle removal}, as the task involves only localized distributional shifts while leveraging the strong contextual information encoded in the reference video.

\section{Proof of Lipschitz Continuity of the Conditional Optimal Transport Path under Video Subtitle Erasure}
\label{proof:1}
This section provides a formal justification that, under the localized-editing structure of subtitle removal, the conditional optimal transport (OT) map and its induced rectified flow velocity field are Lipschitz continuous with respect to the latent variable. This property underpins the theoretical feasibility of one-step sampling.

\subsection{Problem Setup}

Let the latent variable be decomposed into a subtitle region and a non-subtitle region:
\begin{equation}
   \mathbf{z} = (\mathbf{z}_{\mathbf{M}}, \mathbf{z}_{\neg \mathbf{M}}), 
\end{equation}
where $\mathbf{M}$ denotes the subtitle mask. The reference distribution $q(\mathbf{z}\mid \mathbf{z}_{\text{ref}})$ and the target distribution $p(\mathbf{z}\mid \mathbf{z}_{\text{ref}})$ differ only within $\mathbf{M}$.
The conditional optimal transport map is denoted by $T^*(\mathbf{z}\mid \mathbf{z}_{\text{ref}})$, and the corresponding conditional OT path is:
\begin{equation}
    \gamma^*(t,\mathbf{z}\mid \mathbf{z}_{\text{ref}}) = (1-t)\mathbf{z} + t\,T^*(\mathbf{z}\mid \mathbf{z}_{\text{ref}}),
\end{equation}
with velocity field:
\begin{equation}
   v^*(\mathbf{z},t\mid \mathbf{z}_{\text{ref}}) = T^*(\mathbf{z}\mid \mathbf{z}_{\text{ref}}) - \mathbf{z}. 
\end{equation}
\subsection{Structural Decomposition of the Conditional OT Map}
Let $T^{*}\left ( \cdot \mid \mathbf{z}_\text{ref} \right ) $ denote the conditional OT map that pushes the reference latent distribution $q(\mathbf{z} \mid \mathbf{z}_\text{ref})$ to the target distribution $p(\mathbf{z} \mid \mathbf{z}_\text{ref})$. Due to the localized nature of the distributional shift, the OT map decomposes as
\begin{equation}
    T^*(\mathbf{z}\mid \mathbf{z}_{\text{ref}}) 
= \big( T^*_{\mathbf{M}}(\mathbf{z}_{\mathbf{M}},\mathbf{z}_{\neg \mathbf{M}}\mid \mathbf{z}_{\text{ref}}),\; \mathbf{z}_{\neg \mathbf{M}} \big).
\end{equation}
i.e., the non-subtitle region is preserved exactly:
\begin{equation}
    T^*_{\neg \mathbf{M}}(\mathbf{z}_{\neg \mathbf{M}}) = \mathbf{z}_{\neg \mathbf{M}}.
\end{equation}
This reflects the fact that subtitle removal should not alter background content or motion outside the subtitle region.
\subsection{Lipschitz Assumption on Local Transport}
Because the subtitle region is small, the perturbation is localized, and the OT cost is quadratic in the latent space, it is natural to assume that the local OT map on $\mathbf{M}$ is Lipschitz continuous with respect to the full latent vector. 

\begin{assumption}\label{assump:1} 
\textbf{(Local Lipschitz Continuity).}  
There exists $L_{\mathbf{M}}>0$ such that for any 
\begin{equation}
    \mathbf{z}^{(1)} = (\mathbf{z}^{(1)}_{\mathbf{M}},\mathbf{z}^{(1)}_{\neg \mathbf{M}}), 
    \quad
    \mathbf{z}^{(2)} = (\mathbf{z}^{(2)}_{\mathbf{M}},\mathbf{z}^{(2)}_{\neg \mathbf{M}}),
\end{equation}
we have
\begin{equation}
\big\| T^*_{\mathbf{M}}(\mathbf{z}^{(1)}_{\mathbf{M}},\mathbf{z}^{(1)}_{\neg \mathbf{M}}\mid \mathbf{z}_{\text{ref}}) 
- T^*_{\mathbf{M}}(\mathbf{z}^{(2)}_{\mathbf{M}},\mathbf{z}^{(2)}_{\neg \mathbf{M}}\mid \mathbf{z}_{\text{ref}}) \big\|
\le L_{\mathbf{M}} \big\| \mathbf{z}^{(1)} - \mathbf{z}^{(2)} \big\|.
\label{eq:local-lip-en}
\end{equation}

\end{assumption}
\subsection{Lipschitz Continuity of the Conditional OT Map}

\begin{proposition}
Under Assumption~\ref{assump:1}, the conditional OT map $T^*$ is Lipschitz with constant $L_T = \sqrt{L_{\mathbf{M}}^2 + 1}$.
\end{proposition}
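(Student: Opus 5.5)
The plan is to use the block structure of $T^*$ from the decomposition above: one block depends only on $\mathbf{z}_{\neg\mathbf{M}}$ and is the identity there, and the other is controlled by Assumption~\ref{assump:1}. Fix $\mathbf{z}_\text{ref}$ and take two latents $\mathbf{z}^{(1)},\mathbf{z}^{(2)}$. The latent space is split orthogonally into the coordinates in $\mathbf{M}$ and those in $\neg\mathbf{M}$, so the squared Euclidean norm of any vector is the sum of the squared norms of its two blocks. Applying this to the difference of images gives
\begin{equation*}
\bigl\|T^*(\mathbf{z}^{(1)}\mid\mathbf{z}_\text{ref})-T^*(\mathbf{z}^{(2)}\mid\mathbf{z}_\text{ref})\bigr\|^2
=\bigl\|T^*_{\mathbf{M}}(\mathbf{z}^{(1)}\mid\mathbf{z}_\text{ref})-T^*_{\mathbf{M}}(\mathbf{z}^{(2)}\mid\mathbf{z}_\text{ref})\bigr\|^2+\bigl\|\mathbf{z}^{(1)}_{\neg\mathbf{M}}-\mathbf{z}^{(2)}_{\neg\mathbf{M}}\bigr\|^2 .
\end{equation*}
Here the second term uses $T^*_{\neg\mathbf{M}}(\mathbf{z}_{\neg\mathbf{M}})=\mathbf{z}_{\neg\mathbf{M}}$.

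Next I bound each term separately. By Assumption~\ref{assump:1}, the first term is at most $L_{\mathbf{M}}^2\|\mathbf{z}^{(1)}-\mathbf{z}^{(2)}\|^2$. The second term is the squared norm of a sub-block of $\mathbf{z}^{(1)}-\mathbf{z}^{(2)}$, so it is at most $\|\mathbf{z}^{(1)}-\mathbf{z}^{(2)}\|^2$. Adding the two bounds gives $(L_{\mathbf{M}}^2+1)\|\mathbf{z}^{(1)}-\mathbf{z}^{(2)}\|^2$, and taking square roots yields the constant $L_T=\sqrt{L_{\mathbf{M}}^2+1}$.

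Two points need care, though neither is a real obstacle. First, the norm must be the Euclidean one on the concatenated latent, so that the Pythagorean split is exact and no cross term appears. Second, the Assumption bounds the $\mathbf{M}$-block by the \emph{full} difference $\|\mathbf{z}^{(1)}-\mathbf{z}^{(2)}\|$, not by the $\mathbf{M}$-part alone. That is why the constants add as $L_{\mathbf{M}}^2+1$ rather than combining into $\max(L_{\mathbf{M}},1)$. A sharper constant would need a block-wise assumption, which is not required here.

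As a corollary, the velocity field $v^*=T^*-\mathrm{id}$ is Lipschitz with constant $L_T+1$, by the triangle inequality. This matches the claim in Section~\ref{sec:theory}.
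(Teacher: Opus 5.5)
Your proof is correct and matches the paper's argument: the paper also splits $\|T^*(\mathbf{z}^{(1)}\mid\mathbf{z}_{\text{ref}})-T^*(\mathbf{z}^{(2)}\mid\mathbf{z}_{\text{ref}})\|^2$ into the $\mathbf{M}$-block plus the identity $\neg\mathbf{M}$-block, bounds the first term by Assumption~\ref{assump:1} and the second by $\|\mathbf{z}^{(1)}-\mathbf{z}^{(2)}\|^2$, then takes square roots. Your two remarks are accurate points the paper leaves implicit: the split needs the Euclidean norm, and the constant is $\sqrt{L_{\mathbf{M}}^2+1}$ rather than $\max\{L_{\mathbf{M}},1\}$ because the assumption bounds the $\mathbf{M}$-block by the full difference.
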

\textit{Proof.}  
For any $\mathbf{z}^{(1)},\mathbf{z}^{(2)}$:
\begin{equation}
    \begin{aligned}
        \big\| T^*(\mathbf{z}^{(1)}\mid \mathbf{z}_{\text{ref}}) 
        &- T^*(\mathbf{z}^{(2)}\mid \mathbf{z}_{\text{ref}}) \big\|^2 \\
        &= \big\| T^*_{\mathbf{M}}(\mathbf{z}^{(1)}_{\mathbf{M}},\mathbf{z}^{(1)}_{\neg \mathbf{M}}\mid \mathbf{z}_{\text{ref}}) 
        - T^*_{\mathbf{M}}(\mathbf{z}^{(2)}_{\mathbf{M}},\mathbf{z}^{(2)}_{\neg \mathbf{M}}\mid \mathbf{z}_{\text{ref}}) \big\|^2 \\
        &\quad + \big\| \mathbf{z}^{(1)}_{\neg \mathbf{M}} - \mathbf{z}^{(2)}_{\neg \mathbf{M}} \big\|^2.
    \end{aligned}
\end{equation}
Applying \eqref{eq:local-lip-en} and the inequality
\begin{equation}
    \|\mathbf{z}^{(1)}_{\neg \mathbf{M}} - \mathbf{z}^{(2)}_{\neg \mathbf{M}}\|
    \le \|\mathbf{z}^{(1)} - \mathbf{z}^{(2)}\|,
\end{equation}
we obtain
\begin{equation}
    \big\| T^*(\mathbf{z}^{(1)}\mid \mathbf{z}_{\text{ref}}) 
    - T^*(\mathbf{z}^{(2)}\mid \mathbf{z}_{\text{ref}}) \big\|
    \le \sqrt{L_{\mathbf{M}}^2 + 1}\, \big\| \mathbf{z}^{(1)} - \mathbf{z}^{(2)} \big\|.
\end{equation}

Thus, the conditional OT map is Lipschitz continuous with constant $L_T = \sqrt{L_{\mathbf{M}}^2 + 1}$. \hfill $\square$

\subsection{Lipschitz Continuity of the Velocity Field}
Since $v^*(\mathbf{z}) = T^*(\mathbf{z}) - \mathbf{z}$, we have:
\begin{equation}
    \|v^*(\mathbf{z}^{(1)}) - v^*(\mathbf{z}^{(2)})\|
\le (L_T + 1)\|\mathbf{z}^{(1)} - \mathbf{z}^{(2)}\|.
\end{equation}

\begin{figure}[!t]
    \centering
    \includegraphics[width=0.48\textwidth]{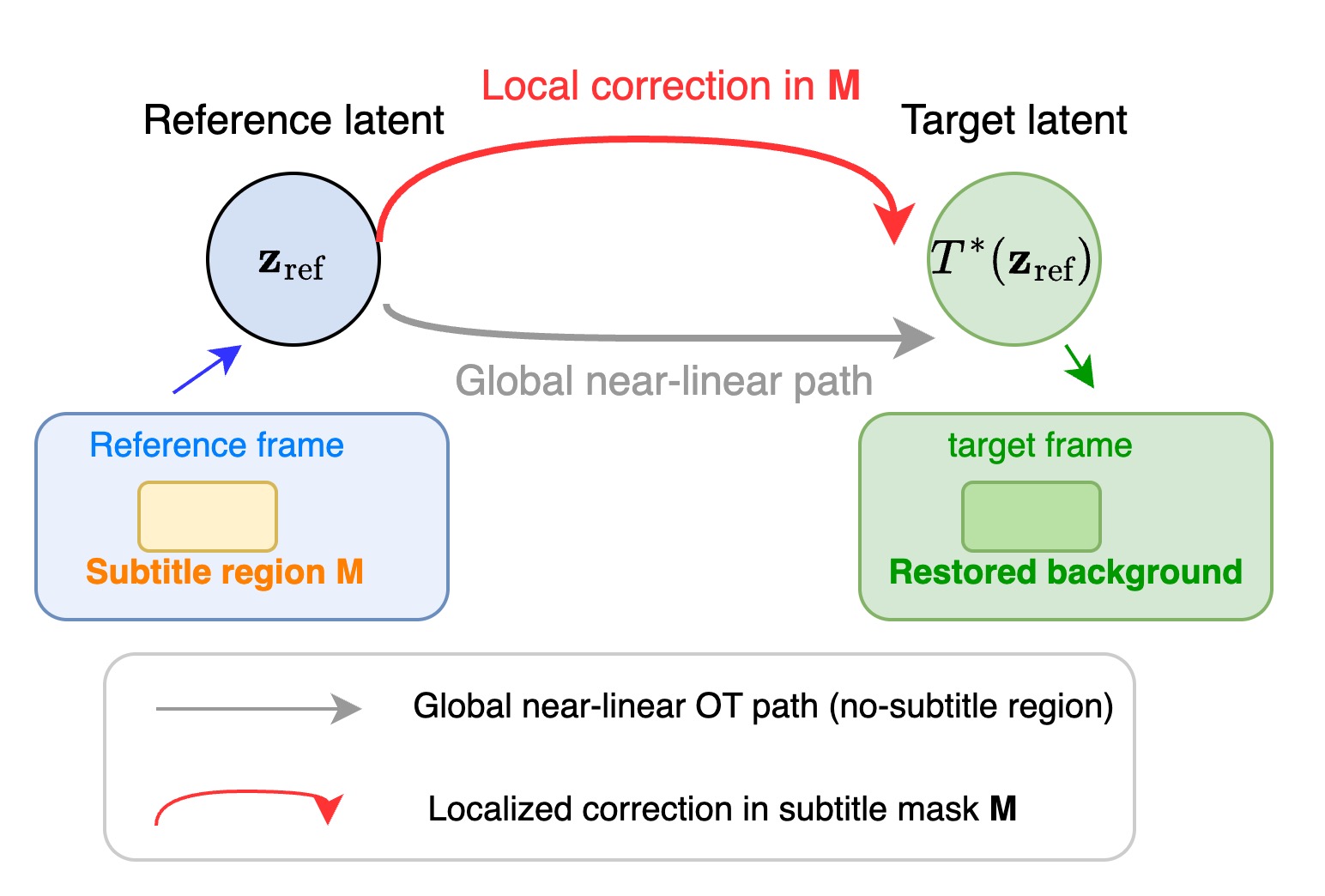}
    \caption{Schematic illustration of the conditional optimal transport (OT) path under localized subtitle removal. 
    The reference and target distributions differ only within a small masked region $\mathbf{M}$, while the remaining 
    content is preserved. As a result, the conditional OT path is globally near-linear and only requires localized 
    corrections within $\mathbf{M}$, leading to a Lipschitz-continuous velocity field and enabling accurate one-step sampling.}
    \label{fig:ot_localized}
\end{figure}

\subsection{Lipschitz Continuity of the Conditional OT Path}
The conditional OT path is defined as
\begin{equation}
    \gamma^*(t,\mathbf{z}\mid \mathbf{z}_{\text{ref}}) = (1-t)\,\mathbf{z} + t\,T^*(\mathbf{z}\mid \mathbf{z}_{\text{ref}}), \quad t \in [0,1].
\end{equation}
For any $\mathbf{z}^{(1)}, \mathbf{z}^{(2)}$
\begin{equation}
    \begin{aligned}
    \big\| \gamma^*(t,\mathbf{z}^{(1)}\mid \mathbf{z}_{\text{ref}}) 
    &- \gamma^*(t,\mathbf{z}^{(2)}\mid \mathbf{z}_{\text{ref}}) \big\| \\
    &\le (1-t)\big\| \mathbf{z}^{(1)} - \mathbf{z}^{(2)} \big\|
    + t L_T \big\| \mathbf{z}^{(1)} - \mathbf{z}^{(2)} \big\| \\
    &= \big( (1-t) + t L_T \big)\, \big\| \mathbf{z}^{(1)} - \mathbf{z}^{(2)} \big\| \\
    &\le \max\{1, L_T\}\, \big\| \mathbf{z}^{(1)} - \mathbf{z}^{(2)} \big\|.
    \end{aligned}
\end{equation}
Hence, the conditional OT path is Lipschitz continuous with constant $\max\{1, L_T\}$.

\subsection{Implication for One-Step Sampling}
In the subtitle erasure task, the mask $\mathbf{M}$ occupies only a small spatio-temporal region, and the background restoration is smooth in both space and time. Consequently, the local Lipschitz constant $L_{\mathbf{M}}$ is moderate, and thus $L_T$ and $\max\{1,L_T\}$ remain well controlled. As shown in Figure~\ref{fig:ot_localized}, the OT path is therefore nearly linear in the non-subtitle region and only mildly perturbed within the subtitle region. This regularity and near-linearity are precisely the structural reasons why one-step rectified flow can accurately approximate the transport dynamics in this task.

\section{Additional Analyses and Discussions}
\subsection{Spatial-temporal cues of subtitles}
Although our method does not use explicit masks, it localizes subtitles through spatial-temporal cues, which naturally distinguish subtitles from background content.

\begin{table*}
\caption{Subtitles and background follow distinct spatial-temporal patterns.}
\label{tab:spatial-temporal-cues}
\begin{tabular}{lccccc}
\Xhline{1.2pt}
\makecell{\textbf{Case}} & \makecell{\textbf{Subtitle}\\\textbf{Variation}} & \makecell{\textbf{Background}\\\textbf{Variation}} & \makecell{\textbf{Spatial-Temporal}\\\textbf{Pattern}} & \makecell{\textbf{Subtitle}\\\textbf{Localization}} & \makecell{\textbf{Typical}\\\textbf{Scenario}} \\
\Xhline{0.8pt}
\hline
\makecell{1.Subtitle changes, \\ background \\ nearly static} &
\makecell{\checkmark\ Significant \\ changes (e.g., \\ sentence \\ switch, \\ fade-in/out)} &
\makecell{\xmark\ Almost \\ unchanged} & 
\makecell{Subtitle variation $\gg $ \\ background variation} &
\makecell{\textbf{Easy to localize}} & 
\makecell{Static camera, \\ subtitles updating} \\
\hline
\makecell{2. Subtitle static, \\ background changes} & 
\makecell{\xmark\ Nearly \\ unchanged} &
\makecell{\checkmark\ Significant \\ changes \\ (camera \\ motion, object \\ movement, \\ illumination changes)} &
\makecell{Background variation \\ $\gg $ subtitle variation} &
\makecell{\textbf{Easy to localize}} &
\makecell{Subtitle \\ persists \\ while the \\ scene is \\ dynamic} \\
\hline
\makecell{3. Subtitle static, \\ background static \\ (extreme case)} &
\makecell{\xmark\ Unchanged} &
\makecell{\xmark\ Unchanged} &
\makecell{No spatial-temporal \\ cues $\to$ relies \\ purely on spatial cues} &
\makecell{Difficult (must infer \\ from subtitle \\ style/shape)} &
\makecell{Very short clips, \\ fully static \\ scenes} \\
\hline
\makecell{4. Subtitle and \\ background both \\ change} &
\makecell{\checkmark\ Changes} &
\makecell{\checkmark\ Changes} &
\makecell{Different change \\patterns: subtitles \\ are localized \& \\ structured; \\ background changes \\ are natural \& irregular} & 
\makecell{Locatable \\ (pattern‑based)} &
\makecell{Camera motion \\ + subtitle \\ transitions}\\
\Xhline{1.2pt}
\end{tabular}

\end{table*}

As summarized in Table~\ref{tab:spatial-temporal-cues}, the subtitles exhibit stable and structured behavior, while the background regions show natural and irregular temporal variations. Even when both regions change, their change patterns remain fundamentally different, providing reliable cues for localization without explicit supervision.

\subsection{The Impact of User Prompt}

\begin{figure*}[htbp]
    \centering
    \includegraphics[width=0.9\textwidth]{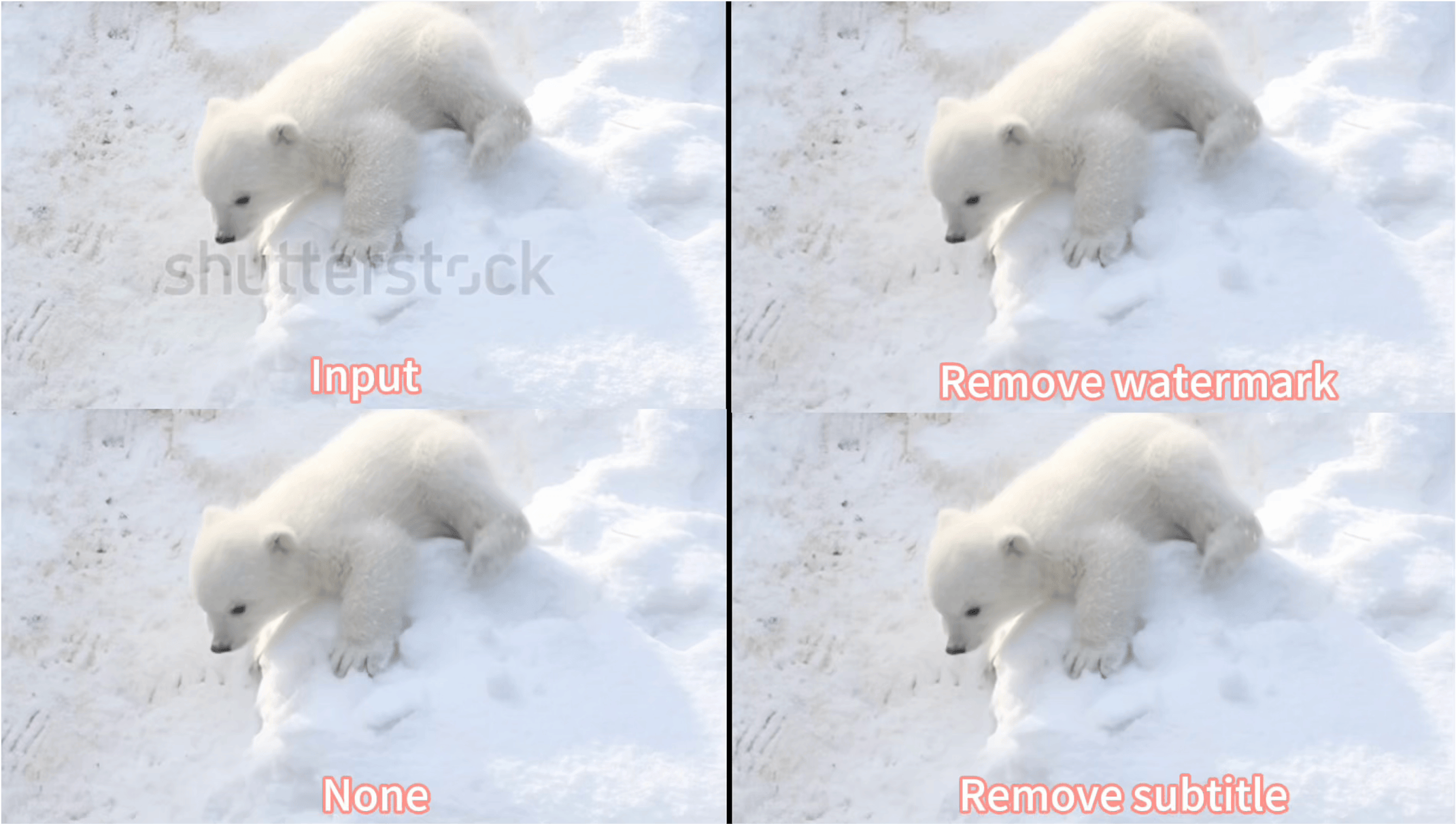}
    \caption{Results of watermark removal.}
    \label{fig:watermark}
\end{figure*}

In the Figure~\ref{fig:watermark}, ``Remove watermark'' corresponds to the user prompt: ``Please remove the \textbf{watermark} from the video while preserving the character appearance, background composition, and color style. Do not add any new elements.'' ``Remove subtitle'' indicates the use of a predefined prompt, while ``None'' means the prompt is an empty string (``"). The results suggest that the subtitle removal prompt has little impact on the output. This is because the training process was designed as a single-task setup. Since the prompt is fixed, the text encoder model does not need to be loaded during inference. Instead, we utilize precomputed prompt embeddings, which reduce memory consumption and improve inference efficiency.

\subsection{Positional Encoding Selection}
The motivation behind our 3D RoPE design stems from the frame-wise alignment nature of the video subtitle removal task. Therefore, we use identical positional encoding for reference and noisy latents: $\mathbf{u}_\text{ref} = \mathbf{u}_\text{noisy} = (f,h,w)$. In the LTX-Video model, the upper limit of $f$ is set to $20$, corresponding to a video length of $20 \times 8 + 1=161$ frames. If we adopt a shifted encoding scheme, such as $\mathbf{u}_\text{ref} = (f,h,w), \mathbf{u}_\text{noisy}=(f+\delta, h, w)$, then the maximum $f$ becomes $10$, limiting the reference video to $10 \times 8 + 1 = 81$ frames. This means a single chunk can only process up to $81$ frames, which is insufficient for our 121-frame setting in 720p resolution. Another alternative is to introduce an extra dimension to distinguish reference and noisy latents: $\mathbf{u}_\text{ref} = (0, f, h, w), \mathbf{u}_\text{noisy} = (1, f, h, w)$. However, this 4D RoPE has a different feature dimensionality from the original 3D RoPE, requiring retraining of the LTX-Video base model and significantly increasing computational cost. Therefore, we chose the most direct, effective, and semantically clear approach.

\subsection{The Visual Results for Video Style Transfer}
To evaluate the generalization capability of the SEDiT architecture and the effectiveness of user prompts, we train a video stylization model on the recently released video editing dataset Ditto-1M~\cite{ditto-1m}. As shown in Figure~\ref{fig:video_style_results}, the \textbf{user prompt} precisely guides the model to generate distinct visual styles. This demonstrates that our model architecture can be readily adapted to other video editing tasks. Unlike subtitle removal, stylization is a global transformation and requires $15$ sampling steps.
\begin{figure*}[!t]
    \centering
    \includegraphics[width=0.9\textwidth]{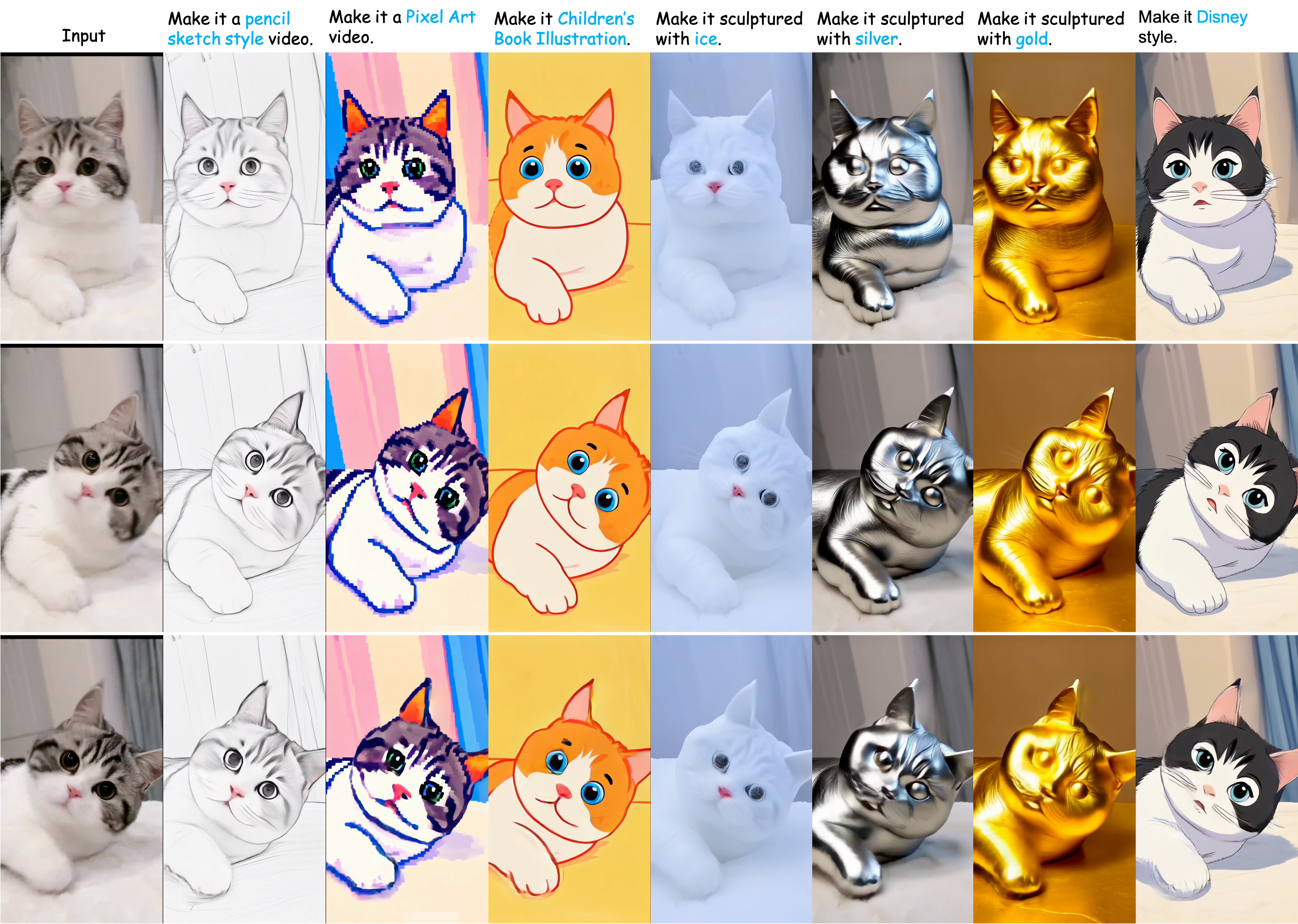}
    \caption{Visual Results for video style transfer.}
    \label{fig:video_style_results}
\end{figure*}

\subsection{Failure case of OCR}

As shown in Figure~\ref{fig:failure_case_ocr}, we apply the current state-of-the-art OCR algorithm, PaddleOCR-VL~\cite{paddleocr-vl}, to detect text in videos containing background characters. In addition to subtitle content, the algorithm also detects other textual elements present in the scene. As shown in the result of the second row, first column, PaddleOCR-VL fails to effectively recognize subtitles with transition effects. Such detection errors directly lead to the failure of mask-based subtitle removal methods. In contrast, our approach successfully removes these subtitles while preserving the original textual content in the scene.

\begin{figure*}[htbp]
    \centering
    \includegraphics[width=0.8\textwidth]{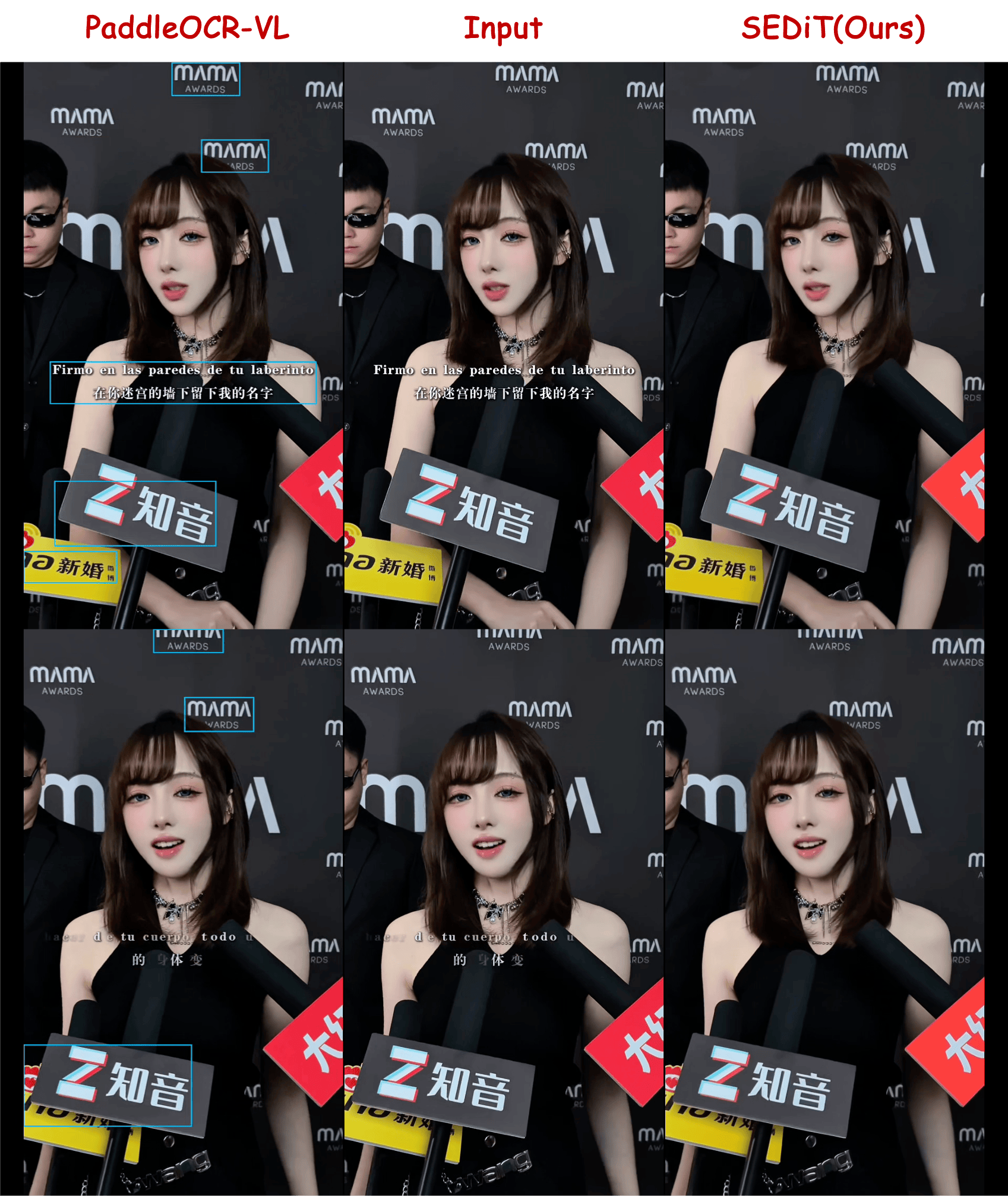}
    \caption{The OCR failure case in subtitle recognition. \emph{Best viewed when zoomed in.}}
    \label{fig:failure_case_ocr}
\end{figure*}

\section{More visual comparisons on VSR-Bench-400 dataset}
We further compared the visual performance of SEDiT with Propainter~\cite{propainter}, DiffuEraser~\cite{diffueraser}, and Minimax-Remover~\cite{minimax-remover} on the VSR-Bench-400 dataset. As shown in Figures~\ref{fig:vsr_bench_1},~\ref{fig:vsr_bench_2},~\ref{fig:vsr_bench_3}, Propainter tends to produce blurriness or artifacts when subtitles occupy a large proportion of the frame. DiffuEraser alleviates some artifacts, but still generates structurally inconsistent content. Among the mask-based approaches, Minimax-Remover achieves the best overall performance, though blurry regions remain. In general, our method strives to preserve as much of the original video content as possible, giving visually satisfactory results.

\begin{figure}[htbp]
    \centering
    \includegraphics[width=\textwidth]{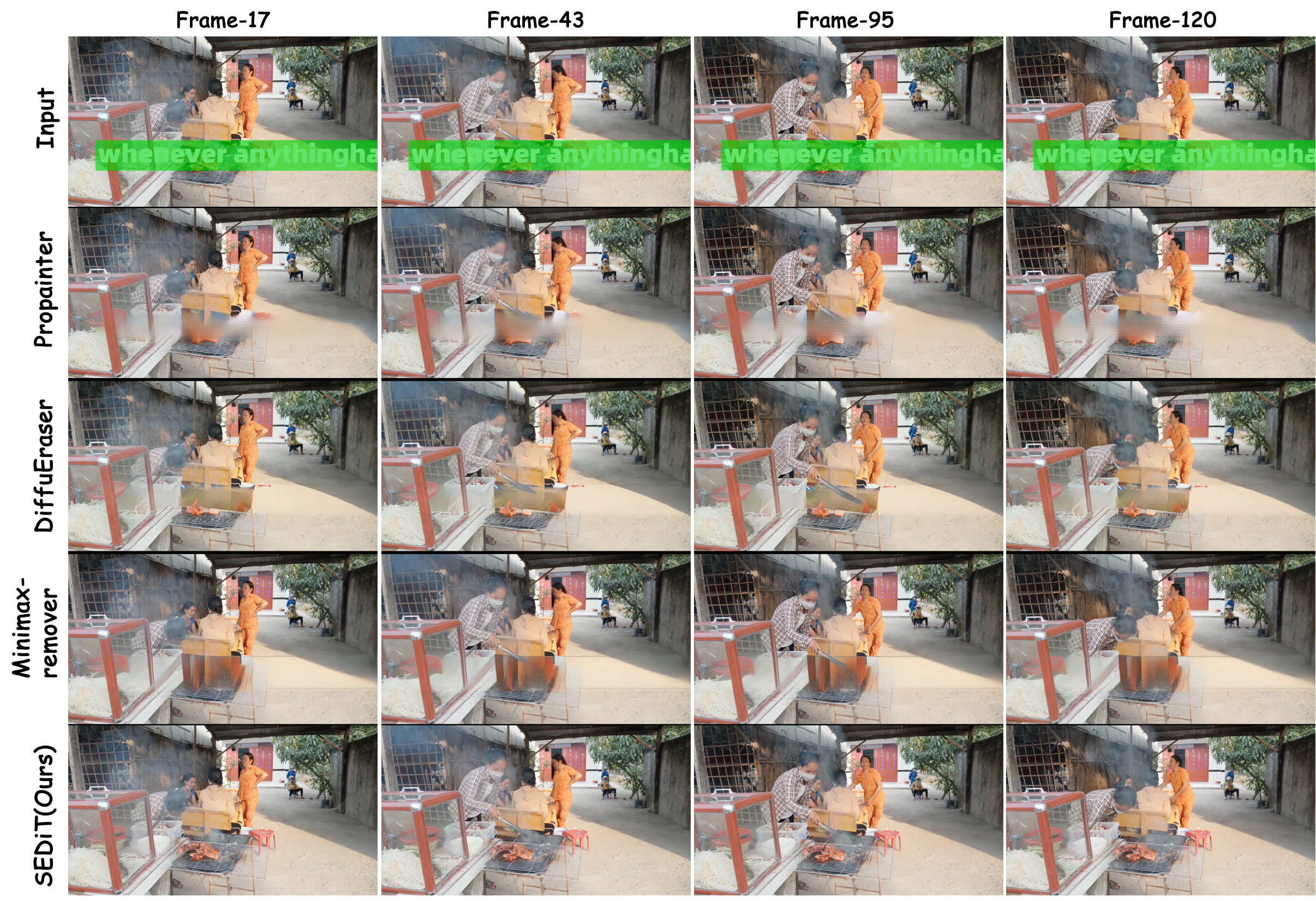}
    \caption{The visual comparison on VSR-Bench-400 dataset. The green highlighted region
            represents the subtitle mask, which is the ground-truth (GT) masks generated by filling the subtitle boundaries.}
    \label{fig:vsr_bench_1}
\end{figure}

\begin{figure}[htbp]
    \centering
    \includegraphics[width=\textwidth]{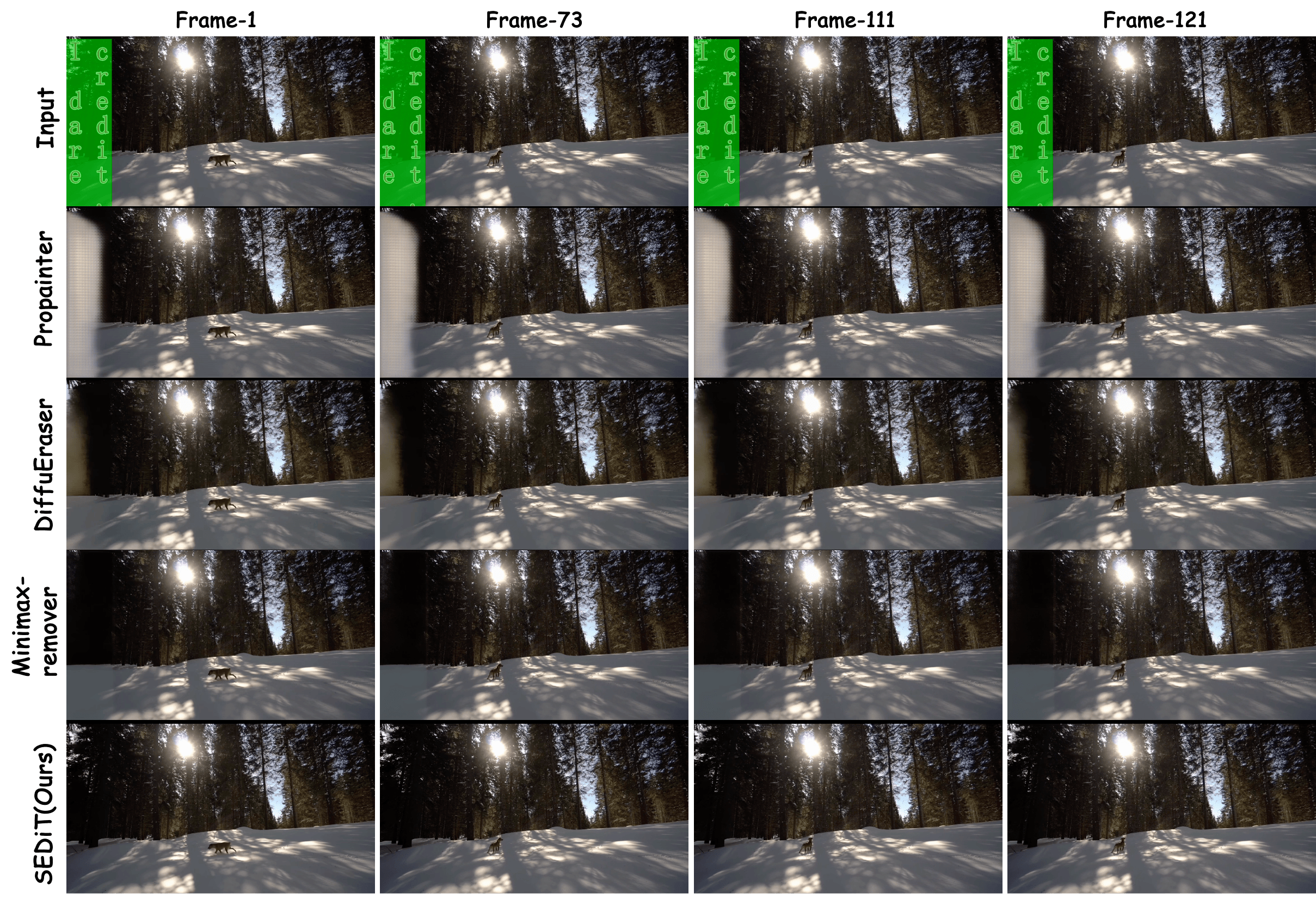}
    \caption{The visual comparison on VSR-Bench-400 dataset. The green highlighted region
            represents the subtitle mask, which is the ground-truth (GT) masks generated by filling the subtitle boundaries.}
    \label{fig:vsr_bench_2}
\end{figure}

\begin{figure}[htbp]
    \centering
    \includegraphics[width=\textwidth]{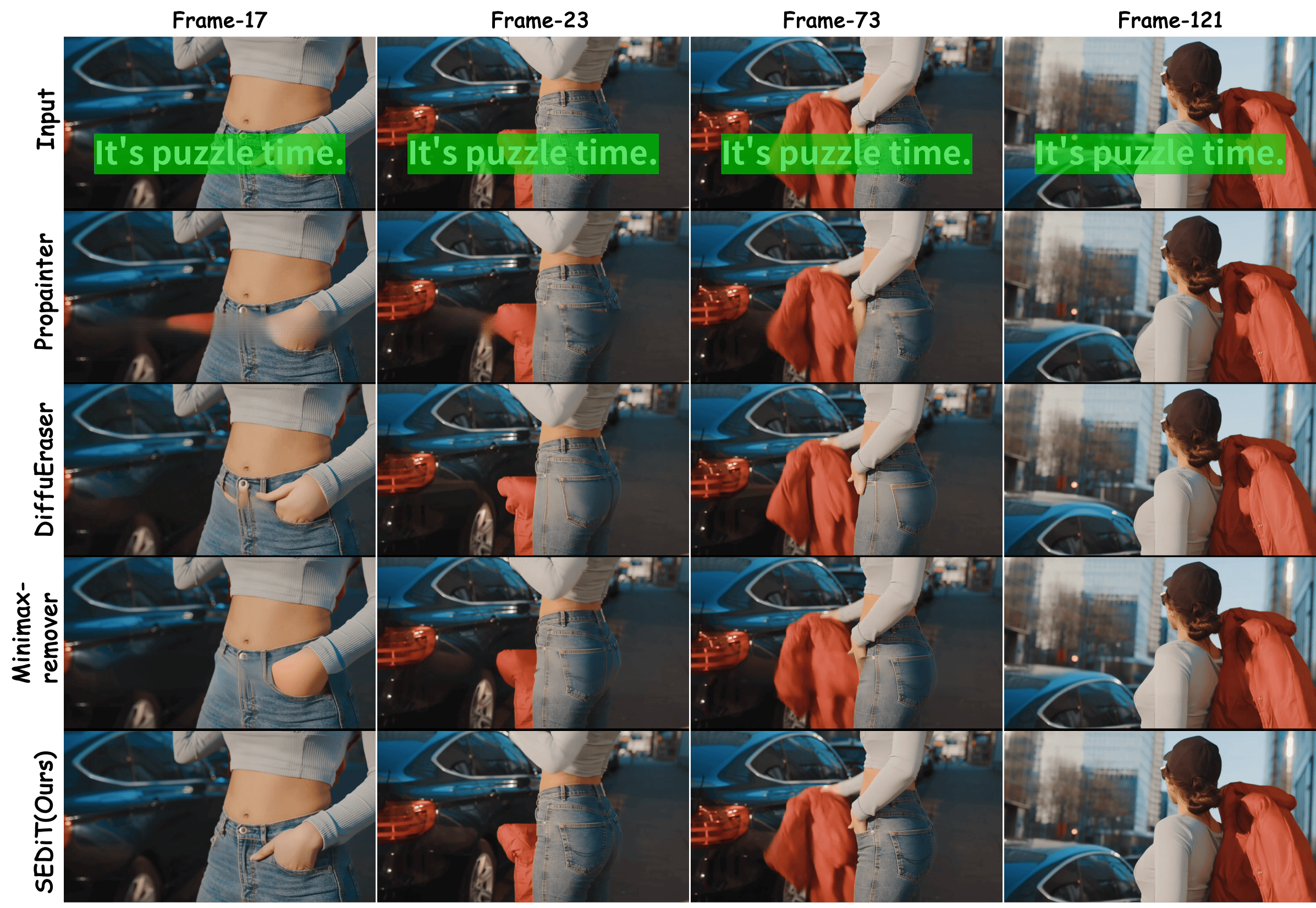}
    \caption{The visual comparison on VSR-Bench-400 dataset. The green highlighted region
            represents the subtitle mask, which is the ground-truth (GT) masks generated by filling the subtitle boundaries.}
    \label{fig:vsr_bench_3}
\end{figure}


\end{document}